\documentclass{article} 
\usepackage{iclr2026_conference,times}

\usepackage{amsmath,amsfonts,bm}

\def\eqref#1{equation~\ref{#1}}

\def\1{\bm{1}}

\DeclareMathAlphabet{\mathsfit}{\encodingdefault}{\sfdefault}{m}{sl}
\SetMathAlphabet{\mathsfit}{bold}{\encodingdefault}{\sfdefault}{bx}{n}

\newcommand{\E}{\mathbb{E}}

\newcommand{\R}{\mathbb{R}}

\usepackage[utf8]{inputenc} 
\usepackage[T1]{fontenc}    
\usepackage{hyperref}       
\usepackage{url}            
\usepackage{booktabs}       
\usepackage{amsfonts}       
\usepackage{nicefrac}       
\usepackage{microtype}      
\usepackage{xcolor}         
\usepackage{tikz}
\usepackage{amsmath}
\usepackage{amssymb}
\usepackage{amsthm}
\newtheorem{theorem}{Theorem}
\theoremstyle{definition}
\newtheorem{definition}{Definition}

\newtheorem{lemma}{Lemma}
\newtheorem{remark}{Remark}

\usepackage{graphicx}
\usepackage{wrapfig}
\usepackage{enumitem}
\setlist{nolistsep}
\usepackage{colortbl}
\usepackage{multirow} 
\usepackage{xcolor}
\usepackage{colortbl}

\usepackage{subcaption}

\definecolor{dotred}{rgb}{0.706, 0.012, 0.153}
\definecolor{triangleblue}{rgb}{0.231, 0.302, 0.753}

\usepackage{xcolor}
\long\def\new#1{\bgroup\color{black}#1\egroup}

\newcommand{\Loss}{\mathcal{L}}
\newcommand{\M}{\mathcal{M}}
\newcommand{\N}{\mathcal{N}}
\newcommand{\Enc}{\mathcal{E}}
\newcommand{\Dec}{\mathcal{D}}
\newcommand{\Prob}{\mathbb{P}}

\title{Bi-Lipschitz Autoencoder With Injectivity Guarantee}

\iclrfinalcopy

\author{Qipeng Zhan$^{*}$, Zhuoping Zhou$^{*}$, Zexuan Wang$^{*}$, Qi Long$^{\dagger}$, Li Shen$^{\dagger}$ \\
University of Pennsylvania\\
\texttt{\{qipengz, zhuopinz, zxwang\}@sas.upenn.edu} \\
\texttt{\{qlong@, li.shen@pennmedicine.\}upenn.edu} \\
$^{*}$Equal contribution. $^{\dagger}$Corresponding authors
}

\begin{document}

\maketitle

\begin{abstract}
Autoencoders are widely used for dimensionality reduction, based on the assumption that high-dimensional data lies on low-dimensional manifolds. Regularized autoencoders aim to preserve manifold geometry during dimensionality reduction, but existing approaches often suffer from non-injective mappings and overly rigid constraints that limit their effectiveness and robustness. In this work, we identify encoder non-injectivity as a core bottleneck that leads to poor convergence and distorted latent representations. To ensure robustness across data distributions, we formalize the concept of admissible regularization and provide sufficient conditions for its satisfaction. In this work, we propose the Bi-Lipschitz Autoencoder (BLAE), which introduces two key innovations: (1) an injective regularization scheme based on a separation criterion to eliminate pathological local minima, and (2) a bi-Lipschitz relaxation that preserves geometry and exhibits robustness to data distribution drift.  Empirical results on diverse datasets show that BLAE consistently outperforms existing methods in preserving manifold structure while remaining resilient to sampling sparsity and distribution shifts. Code is available at \url{https://github.com/qipengz/BLAE}.
\end{abstract}

\section{Introduction}
Autoencoders have been established as powerful tools for dimensionality reduction and visualization. While theoretically promising due to their universal approximation capabilities, vanilla autoencoders often fail to preserve the geometric properties essential for meaningful latent representations. This limitation has motivated two main approaches to regularize the latent space: 1) Gradient-based methods \citep{geometricAE, CAE, GGAE, IRAE} constrain the Jacobian of the encoder or decoder to promote smoothness and local geometric preservation. 2) Graph-based methods align latent representations with distance structures derived from neighborhood graphs \citep{TAE, WAE, SPAE,zhan2025multi} or pretrained embeddings \citep{DN, GRAE}.

Despite their distinct theoretical foundations, both approaches face substantial practical challenges. Graph-based methods depend heavily on the graph accuracy, limiting their effectiveness under sparse sampling conditions. Gradient-based methods exhibit robustness to sample size and yield smoother manifolds but frequently converge to local minima during optimization, resulting in compromised performance that falls short of theoretical expectations.

From a differential topology perspective, gradient constraints typically encode local geometric properties. For example, a mapping $f:\M \rightarrow \R^n$ satisfying $J_f^\top J_f\equiv I$ is an isometric immersion. With an additional global injectivity condition, $f$ becomes an embedding. This motivates us to investigate the interplay between injectivity and optimization in gradient-based autoencoders. Through theoretical and empirical analysis, we identify the non-injectivity of the encoder as a key bottleneck in training autoencoders effectively. To address this limitation, we propose a novel framework that guarantees injectivity through separation criteria in the latent space, helping to avoid pathological local minima that can trap standard gradient-based methods and enabling them to achieve their theoretical potential.

To create a geometrically consistent latent embedding that maintains intrinsic manifold structures, robustness against distributional shifts is essential because our goal isn't tied to specific data distributions. This requires the imposed regularization to be admissible — specifically, the targeted geometric properties must be strictly satisfied. While prior work employs isometric constraints \citep{IRAE,isometricAE,GGAE}, such embeddings typically demand $\mathcal{O}(m^2)$ dimensions, where $m$ is the dimension of the manifold. This severely impairs the efficiency of the autoencoder. To address this, we propose a principled relaxation through bi-Lipschitz regularization, achieving linear complexity in manifold dimension while maintaining robustness to distribution shifts.

To validate our theoretical insights, we develop the Bi-Lipschitz Autoencoder (BLAE), which combines injective and bi-Lipschitz regularization. Our experiments across multiple datasets demonstrate that BLAE preserves manifold structure with higher fidelity than existing methods while exhibiting significant robustness to distribution shifts and sparse sampling conditions.

The rest of this paper is organized as follows. Section~\ref{sec:bottleneck} discusses the limitations of existing methods. Section~\ref{sec:BLAE} introduces our proposed framework. Section~\ref{sec:related} reviews related work. Section~\ref{sec:experiment} presents experimental results, and Section~\ref{sec:conclusion} concludes the paper.

\section{Bottleneck of autoencoders}
\label{sec:bottleneck}
Throughout this work, we assume the intrinsic data manifold $\M\subset\R^m$ is a compact and connected Riemannian manifold with  Riemannian measure $\mu_\mathcal{M}$, unless otherwise stated. We also assume the data distribution $\Prob$ on $\M$ is equivalent to $\mu_\mathcal{M}$, i.e., $\Prob \ll \mu_\mathcal{M}$ and $\mu_\mathcal{M} \ll \Prob$.

\subsection{Autoencoders and regularizations}
Conventional manifold learning and dimension reduction frameworks typically assume that high-dimensional data resides on a low-dimensional manifold $\M\subset\R^m$. An autoencoder learns this intrinsic representation through a pair of parameterized mappings $(\Enc_\theta,\Dec_\phi)$ via neural networks:
\begin{itemize}[leftmargin=*]
    \item The encoder $\Enc_\theta:\R^m\rightarrow\R^n (n\ll m)$ compresses data onto a low-dimensional latent space;
    \item The decoder $\Dec_\phi:\R^n\rightarrow\R^m$ reconstructs the original data from the latent representation.
\end{itemize}
Training optimizes this encoder-decoder pair by minimizing the expected reconstruction error:
\begin{equation}\label{eq:recon}
    \Loss_{\text{recon}}=\E_{x\sim \Prob}\big\|x-\Dec_\phi(\Enc_\theta(x))\big\|^2.
\end{equation}
%
%
While vanilla autoencoders effectively learn compressed representations, they often fail to preserve essential data properties. Advanced variants have emerged to address requirements such as representation smoothness, model robustness, and geometric preservation. Based on their regularization mechanisms, we classify these variants into three categories.

\textbf{Gradient-regularization.}
This category applies explicit constraints on the derivatives of network mappings. Formally, gradient regularization is expressed as:
\begin{align}
    \Loss_{\text{grad}}=\E_{x\sim \Prob}\big[R_1(J_{f}(x))\big],
\end{align}
where $R_1$ is a loss function, and $J_f(x)$ represents the Jacobian matrix of a differentiable function $f$ at input $x$. The function $f$ can be instantiated as either the encoder $\Enc_\theta$ or the decoder $\Dec_\phi$. A canonical example is the contractive autoencoder \citep{CAE}, where $R_1(\cdot)=\|\cdot\|^2_F$ imposes Frobenius-norm constraints on the encoder's Jacobian to promote local stability.

\textbf{Geometry-regularization.}
This category preserves distance relationships when mapping from the manifold $\M$ to the latent space $\N$. Geometry regularization takes the form:
\begin{align}
    \mathcal{L}_{\text{geo}}=\E_{x,y\sim \Prob}\left[R_2(d_\M(x,y),d_\N(\Enc_\theta(x),\Enc_\theta(y)))\right],
\end{align}
where $R_2$ is a loss function and $d_*$ denotes the geodesic distance on the corresponding manifold. Implementation typically involves approximating geodesic distances $d_\M$ through $k$-nearest neighbors ($k$-NN) graph construction and shortest-path computations. The latent space metric $d_\N$ is conventionally defined as the Euclidean distance $\|\cdot\|$.

\textbf{Embedding-regularization.}
The last category directly guides the encoder to match embeddings produced by classical dimension reduction methods. The regularization is formulated as:
\begin{align}
    \mathcal{L}_{\text{emb}}=\E_{x\sim\Prob}\|\Enc_\theta(x)-z\|^2,
\end{align}
where $z$ is the target embedding of $x$ obtained from methods such as Isomap or diffusion maps.

\begin{remark}
Since both embedding and geometry regularizations rely on graph construction procedures standard in conventional dimensionality reduction methods, we unify them under the framework of \textbf{Graph-Regularization}.     
\end{remark}

\subsection{Why autoencoders and gradient variants fail?}\label{subsec:fail}
The universal approximation theorem grants neural network-based autoencoders significant potential for learning effective latent representations. However, in practice, standard gradient descent optimization frequently converges to poor local minima rather than discovering globally optimal solutions. This convergence behavior—whether terminating in suboptimal local minima or achieving global optimality—is fundamentally connected to the concept of \textit{injection}.
\begin{definition}[Injection]
$f$ is an injection (injective mapping) on $\M$, if $f(x)\neq f(y), \forall x\neq y\in \M$.
\end{definition}


Non-injective encoders create latent space collisions where distinct data points map to identical or nearly identical codes, resulting in inevitable reconstruction errors and suboptimal model performance\footnote{Formally, as the reconstruction loss is calculated in expectation, injective guarantees need only hold a.e..}. In practical implementations with finite sample sets $\{x_1,\cdots, x_N\}\subset \M$, the point-wise injectivity condition $\Enc_\theta(x_i)\neq \Enc_\theta(x_j)$ for all $i\neq j$ typically holds ($w.p.1$) when the encoder $\Enc_\theta$ avoids local constancy. However, this does not guarantee global injectivity. More precisely, even when $\Enc_\theta(x_i)\neq \Enc_\theta(x_j)$ for distinct sample points, if disjoint neighborhoods $U_i, U_j$ of these points have overlapping encodes ($\Enc_\theta(U_i)\cap \Enc_\theta(U_j)\neq \varnothing$), the injectivity property is violated.

Figure~\ref{fig:toyexample} illustrates this non-injective bottleneck using a toy example of 20 points sampled from a V-shaped manifold. We analyze three autoencoder configurations with two hidden layers of varying capacities (hidden dimensions: 2, 16, 256). The condition $U_i\cap U_j=\varnothing$ and $\Enc_\theta(U_i)\cap \Enc_\theta(U_j)\neq \varnothing$ manifests when the encoder maps points from distant manifold regions to nearby coordinates in the latent space. \new{Figures ~\ref{fig:toyexample}~(h)~(j)~(l)} demonstrate this distortion between red and blue classes. 

To compensate for these latent space collisions, the decoder must generate sharp variations, appearing as high local curvature, within intersection regions to minimize reconstruction error, as shown in \new{Figure~\ref{fig:toyexample}~(k)}. The required network complexity for these variations scales polynomially with the density of encoded data points. When the decoder's capacity cannot accommodate these geometric demands, the optimization process becomes trapped in suboptimal local minima where gradient signals align with the network's expressivity boundaries (\new{Figure~\ref{fig:toyexample}~(g)}).


\begin{figure}[thbp]
\vspace{-6pt}
    \centering
    \includegraphics[width=0.85\linewidth]{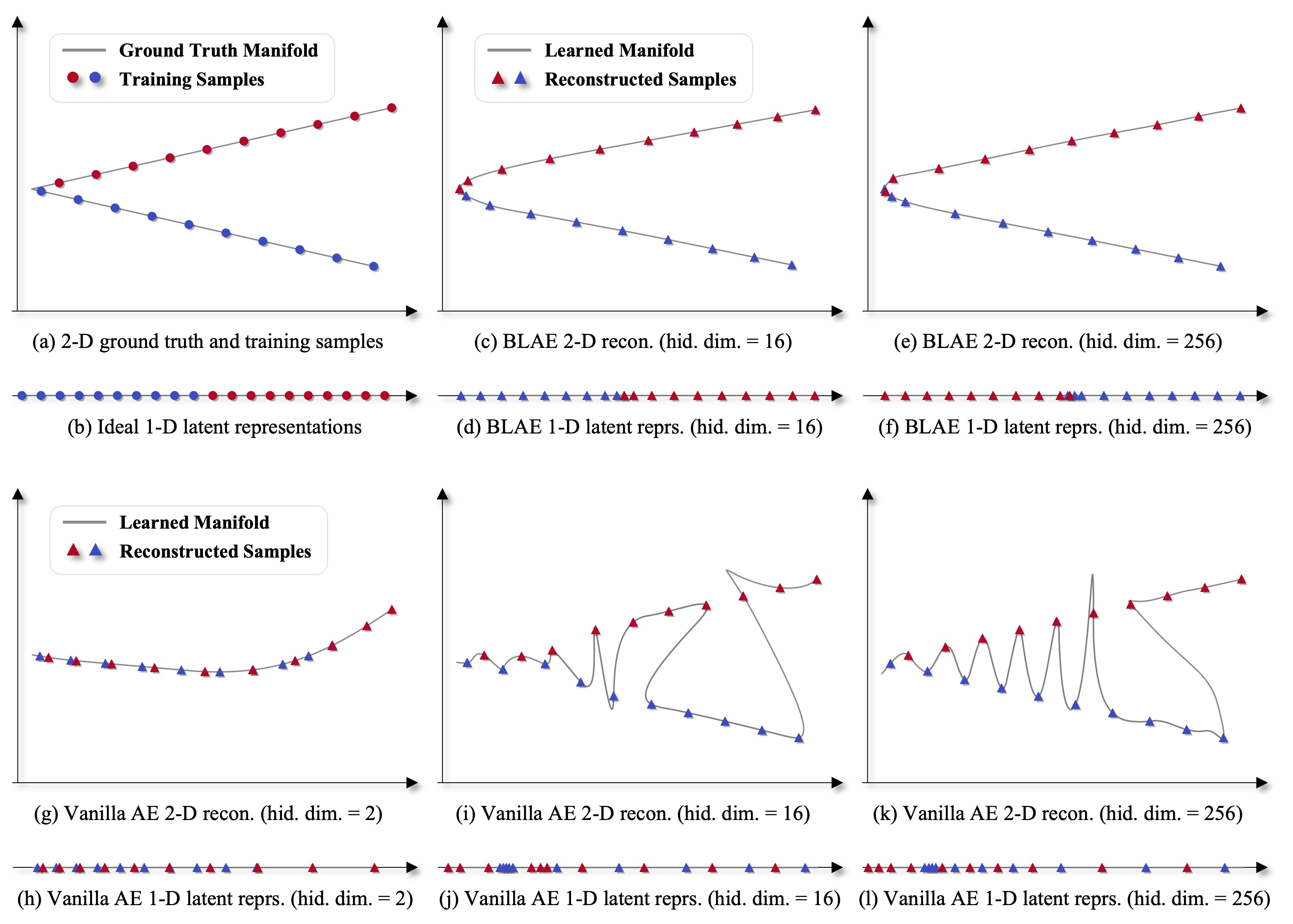}
    \vspace{-10pt}
    \caption{Toy example demonstrating the non-injective encoder bottleneck. (a) 20 training points sampled from a V-shaped manifold with two classes (red and blue). (b) Ground truth: an ideal encoder separates the two classes in a 1D latent space. (c)~(e) Reconstructed manifolds by BLAE (ours) with hidden dimensions 16 and 256, respectively. (d)~(f) Corresponding latent representations learned by BLAE showing proper class separation. (g)~(i)~(k) Reconstructed manifolds by vanilla autoencoders with hidden dimensions 2, 16, and 256. (h)~(j)~(l) Corresponding latent representations showing non-injective collapse, where distant manifold regions (red vs. blue) map to overlapping latent codes, resulting in pathological local minima.}
    \label{fig:toyexample}
\vspace{-6pt}
\end{figure}

Notably, gradient-based regularization schemes suffer from a similar injectivity bottleneck. While these methods effectively constrain local mapping properties through differential constraints, they remain insufficient for ensuring global injectivity—a critical topological property that distinguishes proper embeddings from mere immersions. In approaches like \citep{GGAE,IRAE}, isometric constraints yield locally structure-preserving encoders via isometric immersion but fail to satisfy the additional topological requirements for genuine manifold embedding.

\subsection{Graph regularizations vs. gradient regularizations}
\label{subsec:vs}
Unlike gradient-based approaches, graph-based autoencoder variants inherently enforce injective mapping by requiring distance preservation between manifolds $\M$ and latent space $\N$, where $d_\M>0\Rightarrow d_\N>0$. However, these graph-driven approaches have two fundamental limitations: i) Their discrete graph approximations become unreliable under sparse sampling conditions, as shortest-path distances increasingly deviate from true manifold geodesics; ii) The Euclidean metric assumption in latent space introduces systematic errors unless $\N$ satisfies strict convexity requirements. Comparatively, when the injectivity of the encoder is guaranteed, gradient-based methods can achieve smoother embeddings with greater robustness to variations in sample density.

\section{Gradient autoencoder with injective constraint}
\label{sec:BLAE}

\subsection{Separation criterion and injective regularization }

Our theoretical analysis in Section \ref{subsec:fail} yields two critical insights: i) Imposing injectivity constraints on the encoder can effectively eliminate pathological local minima that trap optimization trajectories. ii) Sample-level condition $\Enc_\theta(x_i)\neq \Enc_\theta(x_j)$ is insufficient for global injectivity. To address these issues, we must prevent distant manifold neighborhoods from collapsing into proximal latent clusters, leading us to enforce metric \textit{separation}:
\begin{definition}[$(\delta,\epsilon)$-separation] Given $\delta,\epsilon>0$, a mapping $f:\M\rightarrow \N$ is $(\delta,\epsilon)$-separated if for all $x,y\in\M$ satisfying $d_\M(x,y)\ge\delta$:
\begin{align}
\frac{d_\N(f(x),f(y))}{d_\M(x,y)}>\epsilon.
\end{align}
\end{definition}
This separation criterion provides a sufficient condition for $f$ to be injective: if $f$ is $(\delta,\epsilon)$-separated for any $\delta,\epsilon>0$, then $f$ must be an injection. Indeed, under some mild assumptions, this condition serves as an equivalent characterization of injection:
\begin{theorem}\footnote{All proofs are deferred to Appendix~\ref{proof:all}.}\label{separation}
Suppose $f:\M\rightarrow \N$ is continuous and $\M$ is compact, then $f$ is injective if and only if for any $\delta>0$, there exists $\epsilon>0$ such that $f$ is $(\delta,\epsilon)$-separated.
\end{theorem}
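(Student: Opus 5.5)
We prove the two directions separately. The easy direction is ``if'' and does not need compactness.

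\emph{Separation implies injectivity.} Suppose that for every $\delta>0$ there is an $\epsilon>0$ with $f$ $(\delta,\epsilon)$-separated. Take $x\neq y$ in $\M$ and set $\delta=d_\M(x,y)>0$. For the corresponding $\epsilon$, the pair $x,y$ satisfies $d_\M(x,y)\ge\delta$, so
\[
d_\N(f(x),f(y))>\epsilon\, d_\M(x,y)>0 .
\]
Hence $f(x)\neq f(y)$.

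\emph{Injectivity implies separation.} Assume $f$ is continuous and injective on the compact manifold $\M$, and fix $\delta>0$. If no pair satisfies $d_\M(x,y)\ge\delta$, any $\epsilon$ works vacuously. Otherwise consider
\[
K_\delta=\{(x,y)\in\M\times\M:\ d_\M(x,y)\ge\delta\}.
\]
This set is closed in the compact space $\M\times\M$, because the geodesic distance is continuous. It is therefore compact.

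On $K_\delta$ define
\[
g(x,y)=\frac{d_\N(f(x),f(y))}{d_\M(x,y)} .
\]
The denominator is at least $\delta>0$, and $f$ and $d_\N$ are continuous, so $g$ is continuous on $K_\delta$. Pairs in $K_\delta$ have $x\neq y$, so injectivity gives $g>0$ there. By the extreme value theorem, $g$ attains a minimum $m_\delta>0$ on the nonempty compact set $K_\delta$. Choosing $\epsilon=m_\delta/2$ gives $g>\epsilon$ on $K_\delta$, which is exactly $(\delta,\epsilon)$-separation.

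The only delicate step is justifying that $K_\delta$ is compact and that $g$ is continuous. This requires $d_\M$ to be a continuous metric inducing the manifold topology, which holds for the Riemannian distance on a connected manifold. We also use that $d_\N$ is continuous, which holds when it is the Euclidean metric or a geodesic distance. With these facts, the argument is a direct compactness/minimum argument. One should also note the strict inequality: it is obtained by halving the minimum, since a positive minimum $m_\delta$ alone gives only $g\ge m_\delta$.
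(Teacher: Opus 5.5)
Your proof is correct and follows the paper's argument essentially step for step. The ``if'' direction takes $\delta=d_\M(x,y)$, and the ``only if'' direction minimizes the continuous, positive ratio over the compact set $\{(x,y): d_\M(x,y)\ge\delta\}$, then halves the minimum to get the strict inequality; your separate treatment of the case where this set is empty is a small extra bit of care that the paper's proof skips.
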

%
To address the limitations of naive injectivity constraints, we propose a regularization that penalizes sample pairs violating the separation condition:
\begin{equation}
\mathcal{L}_{\text{inj}}(\delta,\epsilon) = \mathbb{E}_{x,y \sim \Prob} \left[ \text{ReLU}\left(\log\frac{\epsilon d_\mathcal{M}(x, y)}{d_\mathcal{N}(\Enc_\theta(x),\Enc_\theta(y))}\right) \cdot \mathbf{1}_{d_\mathcal{M}(x,y) > \delta} \right].
\end{equation}
However, this penalty permits a trivial optimization path to zero loss: simply scaling the encoder by a factor $k=\epsilon\cdot\max\frac{d_\M(x_i,x_j)}{d_\N(\Enc_\theta(x_i),\Enc_\theta(x_j))}$. To prevent this, we additionally constrain the encoder to be \textit{non-expansive}, meaning $\forall x,y\in\M$, $d_\N(\Enc_\theta(x),\Enc_\theta(y))\le d_\M(x,y)$. Our final regularization term combines both constraints:
\begin{equation}
    \mathcal{L}_{\text{reg}}(\delta,\epsilon) = \mathcal{L}_{\text{inj}}(\delta,\epsilon) + \alpha\cdot \mathbb{E}_{x,y \sim \Prob} \left[ \text{ReLU}\left( 
    \frac{d_\N(\Enc_\theta(x),\Enc_\theta(y))}{d_\M(x,y)}
    - 1 \right) \cdot \mathbf{1}_{d_\mathcal{M}(x,y) > \delta} \right],
\end{equation}
where $\alpha$ (default: 5) is a weighting factor calibrating the strength of the non-expansive constraint.

\begin{remark}
    While Theorem \ref{separation} requires validating the separation condition for all $\delta>0$, in practical implementations, it suffices to validate at a threshold $\delta_{\min}=\min_{i\neq j}d_\M(x_i,x_j)$.
\end{remark}

Similar to graph-based approaches, we use Euclidean distance for $d_\N$ and approximate $d_\M$ through graph construction. Although this approximation introduces some systematic error as discussed in Section \ref{subsec:vs}, our separation criterion proves remarkably resilient to such approximations compared to other geometry-based regularizations. This robustness allows effective combination with gradient regularization techniques (see Appendix \ref{appendix:compare} for details).

The proposed injective regularization systematically mitigates pathological local minima in the loss landscape. Figure~\ref{fig:contour} visualizes this effect through 2D loss landscapes comparing a standard autoencoder with its injective-regularized counterpart. Both models were initialized at $\theta_0$ and optimized to $\theta_1$ (vanilla autoencoder) and $\theta_2$ (our regularized autoencoder), respectively. The contour maps represent loss values across the parameter subspace spanned by vectors $\theta_0-\theta_1$ and $\theta_0-\theta_2$. Figure~\ref{fig:contour}(a) reveals how the reconstruction loss landscape contains a local minima that trap the vanilla model during optimization. In contrast, Figure~\ref{fig:contour}(b) illustrates how our combined loss (reconstruction plus injective regularization) reshapes the landscape to provide smoother optimization paths toward the superior global minima.

\begin{figure}[htbp]
    \centering
    \includegraphics[width=0.8\linewidth]{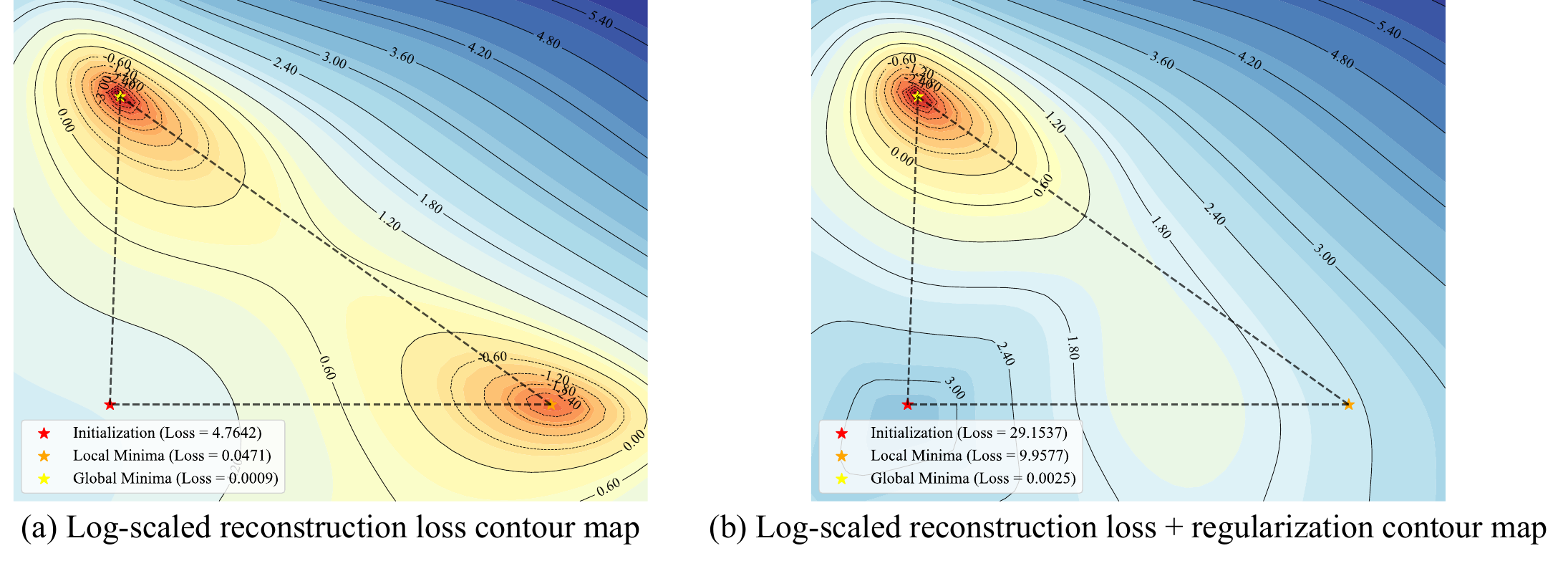}
    \vspace{-10pt}
    \caption{Loss landscapes of autoencoders on Swiss roll data. Warmer colors indicate lower loss. (a) Log-scale contour of reconstruction loss, showing the presence of local minima. (b) Log-scale contour of reconstruction loss combined with regularization, where local minima are eliminated.}
    \label{fig:contour}
\vspace{-6pt}
\end{figure}

\subsection{Robustness to distribution shift and admissible regularization}

When training autoencoders, both reconstruction error and regularization terms are computed as expectations over a specific data distribution $\Prob$ on $\M$. This typically causes the resulting latent space embedding $\N$ to be influenced by $\Prob$. However, our fundamental goal is to learn a low-dimensional embedding of the manifold structure itself, independent of any particular data sampling distribution. To achieve robustness against distribution shifts, we introduce the concept of \textit{Admissibility}:


\begin{definition}[Admissibility]
    For a regularization term $\E_{x\sim\Prob}[R(f_\Theta(x))]$, where $R$ is a loss function and $f_\Theta$ belongs to a parameterized smooth function class $\mathcal{F}_{\Theta}$ (which may represent the encoder, decoder, or their Jacobians), let
    \begin{equation}
        S_\Prob := \arg\min_{f_\Theta\in\mathcal{F}_\Theta}\E_{x\sim\Prob}[ R(f_\Theta(x))].
    \end{equation}
    The regularization is admissible if for any probability measures $\Prob$ and $\mathbb{Q}$ which are both equivalent to $\mu_\mathcal{M}$, $S_\Prob=S_{\mathbb{Q}}$, i.e., the set of global minima is independent of the probability measure.
\end{definition}
The following theorem provides a constructive approach for designing admissible regularizations:
\begin{theorem}\label{thm:admissible}
    Let $R$ be a loss function that has a global minimum, and if
    \begin{align}
        \min_{f_\Theta\in\mathcal{F}_\Theta}\E_{x\sim\Prob}[R(f_\Theta(x))]=\min_{u} R(u),
    \end{align}
    then the corresponding regularization $\E_{x\sim\Prob}[R(f_\Theta(x))]$ is admissible.
\end{theorem}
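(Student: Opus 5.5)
The plan is to characterize $S_\Prob$ as exactly the set of functions in $\mathcal{F}_\Theta$ that attain the pointwise minimum of $R$ almost everywhere on $\M$, and then observe that this description does not involve $\Prob$ beyond its null sets. Those null sets are the same for every measure equivalent to $\mu_\M$.

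Let $R^* := \min_u R(u)$, which exists by assumption. For any $f_\Theta\in\mathcal{F}_\Theta$ we have $R(f_\Theta(x))\ge R^*$ pointwise, so $\E_{x\sim\Prob}[R(f_\Theta(x))]\ge R^*$. The hypothesis says the minimum over $\mathcal{F}_\Theta$ equals $R^*$ and is attained, so $S_\Prob$ is nonempty.

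Next I will show the identity
\begin{equation*}
S_\Prob=\{f_\Theta\in\mathcal{F}_\Theta:\ R(f_\Theta(x))=R^* \text{ for } \Prob\text{-a.e. } x\in\M\}.
\end{equation*}
Suppose $f_\Theta\in S_\Prob$. Then $g(x):=R(f_\Theta(x))-R^*$ is nonnegative and has zero $\Prob$-integral, so $g=0$ $\Prob$-a.e. Conversely, if $g=0$ $\Prob$-a.e., the expectation equals $R^*$, which is the minimum, so $f_\Theta\in S_\Prob$.

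Now let $\Prob$ and $\mathbb{Q}$ both be equivalent to $\mu_\M$. Then $\Prob$ and $\mathbb{Q}$ are mutually absolutely continuous, so they have the same null sets. Hence $\{x:g(x)\neq 0\}$ is $\Prob$-null if and only if it is $\mathbb{Q}$-null. By the characterization above, $S_\Prob=S_{\mathbb{Q}}$, which is admissibility.

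There is one thing to check in the step where the hypothesis is applied. The hypothesis is stated for $\Prob$, but the argument needs it for every equivalent measure, and the statement should be read that way. Under this reading, the argument for $\mathbb{Q}$ does not need the hypothesis separately. The expectation under $\mathbb{Q}$ is always at least $R^*$, and any $f_\Theta\in S_\Prob$ satisfies $g=0$ $\mathbb{Q}$-a.e. It therefore attains $R^*$ under $\mathbb{Q}$, so the minimum value under $\mathbb{Q}$ is also $R^*$ and is attained. So assuming the equality for one equivalent measure is enough, and I will note this explicitly.

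A measurability detail remains: $x\mapsto R(f_\Theta(x))$ must be measurable. This holds because $f_\Theta$ is smooth and $R$ is assumed to be a measurable, for example continuous, loss. I expect no real obstacle beyond stating these conventions carefully.
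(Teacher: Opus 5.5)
Your proof is correct and follows essentially the same route as the paper's: both identify $S_\Prob$ with the set of $f_\Theta\in\mathcal{F}_\Theta$ whose values lie in $\arg\min_u R(u)$ almost everywhere, and then use the fact that measures equivalent to $\mu_\M$ have the same null sets. The paper states the a.e.\ condition directly with respect to $\mu_\M$, while you state it with respect to $\Prob$ and then transfer it. Your extra observation is worthwhile: the hypothesis for one equivalent measure automatically gives the minimal value $\min_u R(u)$ under $\mathbb{Q}$ as well, and the paper's ``similarly'' step for $\mathbb{Q}$ tacitly relies on exactly this.
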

%
Admissible regularizations typically enforce specific geometric or functional properties uniformly across the manifold. Consider, for example, a regularizer of the form $R(f(x))=\|(f(x))^\top f(x)-I\|^2_F$, where $f$ represents the Jacobian of the decoder. This formulation imposes an isometric constraint, and achieving its minimum ensures the decoder maintains local isometry at each input point $x$. The admissibility of such a regularization guarantees that this isometric property is preserved regardless of how data points are sampled from the manifold.

\begin{remark}
The standard reconstruction error can itself be viewed as a special case of admissible regularization, where $R=\|\cdot\|^2$ and $\mathcal{F}_\Theta=\{\Dec_\theta\circ\Enc_\phi-\text{id}\ |\ \theta,\phi\}$, with $\text{`id'}$ representing the identity mapping. This reconstruction term is inherently admissible.
\end{remark}

\subsection{Bi-Lipschitz relaxation}
The injectivity property enables the integration of complementary gradient regularizations into autoencoders, enhancing smoothness and geometric fidelity. For robustness against distribution shifts, we need these gradient regularizations to be admissible. Previous research has explored isometric constraints for geometric preservation, but these formulations are proven overly restrictive in practice.

According to the Nash embedding theorem \citep{Nash}, an \new{$k$-dimensional} compact Riemannian manifold requires a latent dimension of \new{$\mathcal{O}(k^2)$} to guarantee an isometric embedding\footnote{The required latent dimension $n$ satisfies \new{$\frac{k(k+1)}{2}\le n\le \frac{k(3k+11)}{2}$}.}. This quadratic scaling contradicts the fundamental purpose of autoencoders as tools for efficient dimensionality reduction. For example, even a modest manifold of intrinsic dimension 10 would theoretically require over 50 dimensions for isometric embedding, introducing substantial representational redundancy.

Furthermore, when the latent dimension fails to meet the requirements for isometric embedding, the corresponding regularization becomes non-admissible, which motivates us to introduce bi-Lipschitz regularization, a principled relaxation scheme that balances geometric preservation with admissibility.

\begin{definition}[Bi-Lipschitz] A mapping $f: \M\rightarrow\N$ is $\kappa$-Bi-Lipschitz, where $\kappa\ge1$, if
\begin{equation}
    \frac{1}{\kappa}\cdot d_\M(x,y) \le d_\N(f(x),f(y))\le \kappa\cdot d_\M(x,y),\quad \forall x,y\in \M.
\end{equation}
\end{definition}
While geodesic distance approximation introduces estimation errors, we can establish the bi-Lipschitz property through differential analysis using the gradient of $f$. Let $f_\M$ denote $f$ restricted to $\M$, and $J^\M_f(x)$ be the Jacobian of $f$ restricted to the tangent space $T_x \M$. We then have:
\begin{theorem}\label{lip2sing}
    Let $f: \M\rightarrow\R^n$ be a smooth mapping. If $\M$ is connected and $f$ is a diffeomorphism, then $f$ is $\kappa$-bi-Lipschitz if and only if 
    \begin{equation}
        \frac{1}{\kappa}\le \sigma_{\min}(J_f^\M(x)) \le \sigma_{\max}(J_f^\M(x)) \le \kappa,\quad \forall x\in\M 
        \label{sing}
    \end{equation}
    where $\sigma_{\min}(J_f^\M(x))$, $\sigma_{\max}(J_f^\M(x))$ are the minimum and maximum singular values of $J_f^\M(x)$.
\end{theorem}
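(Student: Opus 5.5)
The plan is to read $d_\N$ as the intrinsic (geodesic) distance on the image $\N=f(\M)$, with the Riemannian metric induced from $\R^n$. Since $f$ is a diffeomorphism onto $\N$, it maps piecewise-smooth curves in $\M$ bijectively to piecewise-smooth curves in $\N$. Both directions of the equivalence will be reduced to comparing curve lengths. The basic identity is that for a curve $\gamma$ in $\M$,
\begin{equation*}
L(f\circ\gamma)=\int \big\|J_f^\M(\gamma(t))\,\gamma'(t)\big\|\,dt ,
\end{equation*}
together with the pointwise bounds $\sigma_{\min}\|v\|\le\|J_f^\M(x)v\|\le\sigma_{\max}\|v\|$ for $v\in T_x\M$.

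\emph{($\Leftarrow$)} Assume the singular value bounds in the statement hold.
\begin{enumerate}
\item Fix $x,y\in\M$. Because $\M$ is compact and connected, it is complete, so there is a minimizing geodesic $\gamma$ from $x$ to $y$ with $L(\gamma)=d_\M(x,y)$.
\item Then $d_\N(f(x),f(y))\le L(f\circ\gamma)\le\kappa L(\gamma)=\kappa\, d_\M(x,y)$.
\item For the lower bound, apply the same argument to $f^{-1}:\N\to\M$. Its Jacobian is $(J_f^\M)^{-1}$, whose largest singular value is $1/\sigma_{\min}\le\kappa$.
\item Take any curve $\beta$ in $\N$ joining $f(x)$ to $f(y)$. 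Then $d_\M(x,y)\le L(f^{-1}\circ\beta)\le\kappa L(\beta)$.
\item Taking the infimum over $\beta$ gives $d_\N(f(x),f(y))\ge d_\M(x,y)/\kappa$.
\end{enumerate}
Connectedness is what makes both distances finite and realized as infima of curve lengths. This is exactly where the diffeomorphism hypothesis enters, since it lets us transport curves in both directions.

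\emph{($\Rightarrow$)} Assume $f$ is $\kappa$-bi-Lipschitz, and fix $x\in\M$ and a unit vector $v\in T_x\M$. Let $\gamma(t)=\exp_x(tv)$.
\begin{enumerate}
\item For small $t$ we have $d_\M(x,\gamma(t))=t$.
\item The key asymptotic claim is $d_\N(f(x),f(\gamma(t)))=t\|J_f^\M(x)v\|+o(t)$.
\item The upper bound in this claim is immediate from the length of $f\circ\gamma|_{[0,t]}$.
\item The lower bound uses that the intrinsic distance on $\N$ dominates the chordal Euclidean distance, and $\|f(\gamma(t))-f(x)\|=t\|J_f^\M(x)v\|+o(t)$ by differentiability.
\item Dividing the bi-Lipschitz inequality by $t$ and letting $t\to0$ gives $1/\kappa\le\|J_f^\M(x)v\|\le\kappa$ for every unit $v$, which is precisely the singular value bounds in the statement.
\end{enumerate}

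The main obstacle is the choice of $d_\N$. If $d_\N$ were the ambient Euclidean distance on $\R^n$, the argument above still gives the upper Lipschitz bound in ($\Leftarrow$), since chord length is at most arc length. The lower bound can fail, however: a curved image can bring far-apart points close in chord while local stretching stays bounded below. I would therefore state explicitly that $d_\N$ is the geodesic distance of the embedded submanifold $f(\M)$, as in the geometry-regularization setup of Section~\ref{sec:bottleneck}, and note that injectivity (from the diffeomorphism hypothesis) is needed for $f^{-1}$ to exist. With that convention, the only delicate analytic point is the first-order expansion of $d_\N$ along $f\circ\gamma$ in the ($\Rightarrow$) direction, and this is handled by the squeeze between chord and arc length described above.
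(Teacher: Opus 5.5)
Your proposal is correct and follows essentially the same route as the paper. For ($\Rightarrow$) you divide the bi-Lipschitz inequality along a curve through $x$ by $t$ and let $t\to0$. For ($\Leftarrow$) you compare lengths of curves transported by $f$ and $f^{-1}$, taking $d_\N$ to be the intrinsic distance on $f(\M)$, exactly as the paper does. Your version is more careful on the points the paper glosses over: the exact identity $d_\M(x,\gamma(t))=t$ via $\exp_x$, the chord--arc squeeze for $d_\N$, and the explicit use of $(J_f^\M)^{-1}$ for the lower bound. The minimizing geodesic in ($\Leftarrow$) is unnecessary, though; taking the infimum over all curves, as the paper does, gives the bound without invoking completeness.
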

%
Theorem \ref{lip2sing} provides a principled approach to rigorously analyze bi-Lipschitz properties without requiring explicit geodesic distance computation. However, computing singular values of $J_f^{\mathcal{M}}(x)$ requires knowledge of the tangent space, and existing \new{methods~\citep{tangent1,tangent2}} for estimating tangent spaces from empirical data introduce additional errors.

Two key insights help address these challenges: 
i) When $f$ is bi-Lipschitz on $\R^m$, it is naturally bi-Lipschitz on any sub-manifold $\M$. This allows substituting $\mathcal{M}$ with $\mathbb{R}^m$ in Theorem~\ref{lip2sing}.
ii) When $m>n$ (as in typical encoding scenarios), $f$ cannot be a diffeomorphism due to dimensional incompatibility. Therefore, we apply condition \eqref{sing} to the decoder $\Dec_\phi: \R^n\rightarrow \R^m$ rather than the encoder $\Enc_\theta: \R^m\rightarrow \R^n$, since $f$ is $\kappa$-bi-Lipschitz if and only if $f^{-1}$ is $\kappa$-bi-Lipschitz.

Our proposed bi-Lipschitz regularization is formulated as:
%
\begin{equation}
    \mathcal{L}_{\text{bi-Lip}}(\kappa)=\E_{x\sim\Prob } \big[\text{ReLU}(\frac{1}{\kappa}-\sigma_{\min}(x))^2 + \text{ReLU}(\sigma_{\max}(x) - \kappa)^2\big],
\end{equation}
where $\sigma_{\min}(x)$ and $\sigma_{\max}(x)$ represent the smallest and largest singular values of $J_{\Dec_\phi}(x)$, respectively. This regularization retains admissibility even with relatively low embedding dimensions $(\mathcal{O}(m))$:
\begin{theorem} \label{thm:embedding}
    Suppose $\M\subset \R^m$ is a connected compact $k$-dimensional Riemannian manifold. Then there exists a $\kappa$-bi-Lipschitz mapping that embeds $\M$ into $\R^{n}$ for some $\kappa\ge1$ and $k\le n\le 2k$. 
\end{theorem}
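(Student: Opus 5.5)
We will combine the strong Whitney embedding theorem with a compactness argument. By Whitney, any smooth $k$-dimensional manifold admits a smooth embedding $F:\M\to\R^{2k}$, so we may take $n=2k$; the lower bound $n\ge k$ is automatic, since an injective immersion of a $k$-manifold cannot land in a space of dimension below $k$. We then show that any smooth embedding of a compact manifold is $\kappa$-bi-Lipschitz onto its image for some finite $\kappa\ge1$, with $d_\N$ either the Euclidean distance or the intrinsic distance on $F(\M)$ with the induced metric.

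\textbf{Upper bound.} Since $\M$ is compact and $F$ is smooth, $L:=\sup_{x\in\M}\sigma_{\max}(J^\M_F(x))<\infty$. For any $x,y$, take a minimizing geodesic $\gamma$ from $x$ to $y$; it exists by Hopf--Rinow, using compactness and connectedness. Integrating $\|dF(\dot\gamma)\|$ along $\gamma$ gives $d_\N(F(x),F(y))\le L\, d_\M(x,y)$, both for the Euclidean and for the intrinsic image distance.

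\textbf{Lower bound.} This is the main step. We argue by contradiction: suppose there are pairs $x_j\ne y_j$ with $\|F(x_j)-F(y_j)\|/d_\M(x_j,y_j)\to0$. The intrinsic image distance is at least the Euclidean one, so ruling this out handles both choices of $d_\N$. By compactness we pass to a subsequence with $x_j\to x$ and $y_j\to y$.

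\emph{Case $x\ne y$.} Then $d_\M(x_j,y_j)$ stays bounded away from zero, while $\|F(x_j)-F(y_j)\|\to\|F(x)-F(y)\|>0$ by injectivity. So the ratio does not tend to $0$, a contradiction. This is exactly the separation criterion of Theorem~\ref{separation} applied with $\delta$ positive.

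\emph{Case $x=y$.} We work in a normal coordinate chart around $x$. There, $d_\M$ is comparable to the Euclidean coordinate distance, with a constant $c$ uniform on a small neighbourhood. By Taylor's theorem, $F(x_j)-F(y_j)=J_F(\xi)(x_j-y_j)+o(|x_j-y_j|)$. Since $F$ is an immersion, $\sigma_{\min}(J^\M_F)\ge s>0$ near $x$. Hence the ratio is at least roughly $s/c>0$, again a contradiction.

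Taking $\kappa=\max\{L,\ 1/\inf\text{ratio},\ 1\}$ then gives the two-sided bound.

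The delicate point is the local case: we need a uniform first-order expansion on a neighbourhood rather than just at $x$, so that we can handle a pair of points converging together. We handle this via the mean-value inequality in coordinates and continuity of $J_F$, applied on a compact chart ball where all constants are uniform.
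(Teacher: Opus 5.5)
Your proof is correct, but it obtains the lower bound by a different route from the paper.

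\textbf{The paper's route.} The paper takes a Whitney embedding $f$ and notes that $\sigma_{\min}(J^\M_f(x))>0$ pointwise because $f$ is an immersion. It then uses continuity of singular values and a finite subcover to get uniform bounds $1/\kappa\le\sigma_{\min}(J^\M_f)\le\sigma_{\max}(J^\M_f)\le\kappa$ on all of $\M$, and finishes by invoking Theorem~\ref{lip2sing}. In that theorem both inequalities come from integrating along curves. The lower one pulls curves in $f(\M)$ back through the smooth inverse $f^{-1}$. So $d_\N$ there is the intrinsic distance on $f(\M)$, and global injectivity is used only to guarantee that $f^{-1}$ exists.

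\textbf{Your route.} Your upper bound is the same path-integration argument. Your lower bound is instead a sequential-compactness argument against the chordal distance $\|F(x)-F(y)\|$. Injectivity handles limit pairs with $x\neq y$, which is essentially Theorem~\ref{separation}. The immersion property plus a uniform linearization handles $x=y$.

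\textbf{What each route buys.} Yours gives a strictly stronger conclusion: bi-Lipschitz with respect to the Euclidean distance of $\R^n$. That is the $d_\N$ the paper actually uses in $\mathcal{L}_{\text{reg}}$ and in practice. The paper's reduction does not give this directly, since pulling back paths says nothing about short chords of a nearly self-touching image. Your argument also makes explicit why both injectivity (for separated pairs) and the immersion property (for nearby pairs) are needed. The paper's route is shorter once Theorem~\ref{lip2sing} is available. It also ties $\kappa$ directly to the singular-value bounds that $\mathcal{L}_{\text{bi-Lip}}$ penalizes.

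\textbf{Two small fixes.} First, the expansion $F(x_j)-F(y_j)=J_F(\xi)(x_j-y_j)+\ldots$ with a single intermediate point $\xi$ is not valid for vector-valued maps. Write it in a chart instead, with $G=F\circ\phi^{-1}$, as $G(a)-G(b)=\int_0^1 DG(b+t(a-b))(a-b)\,dt$. Then use continuity of $DG$ at $\phi(x)$ together with the metric-comparison constants of the chart to pass between $DG$ and $J^\M_F$; this is what your closing remark intends. Second, the upper bound needs neither a minimizing geodesic nor Hopf--Rinow: taking the infimum of curve lengths suffices, and this also avoids trouble if $\M$ is allowed to have boundary.
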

When the manifold $\M$ admits a $\kappa$-bi-Lipschitz embedding in $\R^n$, the corresponding loss function achieves its minimum (zero) and hence satisfies the assumption of Theorem \ref{thm:admissible}. Consequently, the bi-Lipschitz regularization is admissible.

We refer to an autoencoder regularized by both bi-Lipschitz and injective constraints as a \textbf{Bi-Lipschitz Autoencoder (BLAE)}:
\begin{equation}\tag{BLAE}
    \mathcal{L}_{\text{BLAE}} = \mathcal{L}_{\text{recon}} + \lambda_{\text{reg}}\cdot\mathcal{L}_{\text{reg}} + \lambda_{\text{bi-Lip}}\cdot\mathcal{L}_{\text{bi-Lip}},
\end{equation}
where $\lambda_{\text{reg}}$ and $\lambda_{\text{bi-Lip}}$ are weighting factors. The injective term eliminates local minima caused by non-injective encoders, while the bi-Lipschitz term ensures consistent geometric mapping regardless of data distribution. Together, these constraints enable BLAE to preserve manifold structure with higher fidelity and robustness than existing methods.

\begin{remark}
While Theorem \ref{lip2sing} assumes smoothness (satisfied by neural networks using activation functions like tanh, sigmoid, ELU), our framework naturally extends to continuous piecewise-smooth mappings. This preserves the theoretical conclusions even with non-smooth activation functions like ReLU, demonstrating the universal applicability of our results across neural network architectures.
\end{remark}
\section{Related work}
\label{sec:related}
Standard autoencoders often exhibit geometric distortion in the latent space. To address this, regularized methods have been proposed to preserve geometric structure, which can be grouped into three main paradigms based on their regularization mechanisms:

\textbf{Embedding-regularized autoencoder.}
Conventional dimensionality reduction techniques such as ISOMAP \citep{ISOMAP}, LLE \citep{LLE}, t-SNE \citep{TSNE}, and UMAP \citep{UMAP} effectively preserve geometric structures through neighborhood graphs. However, these methods lack explicit mappings between the original and latent spaces, limiting their ability to generalize to new data points. To overcome this, \citet{GRAE} introduced the Geometry Regularized Autoencoder, a unified framework that integrates autoencoders with classical dimensionality reduction methods to enable robust extension to unseen data. Similarly, Diffusion Nets \citep{DN} enhances autoencoders by learning embedding geometry from Diffusion Maps~\citep{DM} through additional eigenvector constraints.

\textbf{Geometry-regularized autoencoder.}
This approach enforces geometric properties in the latent space via graph-based regularizations. Neighborhood Reconstructing Autoencoders \citep{NRAE} reduce overfitting and connectivity errors by enforcing correctly reconstructed neighborhoods. Structure-Preserving Autoencoders \citep{SPAE} maintain consistent distance ratios between ambient and latent spaces. Topological Autoencoders \citep{TAE} capture data's topological signature through homology groups. While initially developed for Euclidean spaces, these methods can be adapted to non-Euclidean manifolds by constructing neighborhood graphs and computing geodesic distances through shortest paths.

\textbf{Gradient-regularized autoencoder.}
%
This paradigm directly constrains on the derivatives of the network mappings. Contractive Autoencoder \citep{CAE} penalizes the Frobenius norm of the encoder's Jacobian matrix, enforcing local stability and enhancing feature robustness. \citet{flat} proposed regularizing the decoder's induced metric tensor to learn flat manifold representations. Geometric Autoencoders \citep{geometricAE} preserve volume form in latent space by regularizing the determinant of the decoder's metric tensor, improving data visualization.

Recent advances focus on learning isometric embeddings: \citet{isometricAE} enforces identity metric tensors stochastically, while \citet{IRAE} achieves coordinate invariance through spectral constraints. Graph Geometry-Preserving Autoencoders \citep{GGAE} bridge graph-based and gradient-based approaches by leveraging graph Laplacian spectral properties to approximate the underlying Riemannian metric.
\section{Experiments}
\label{sec:experiment}
\textbf{Experimental setup.}
We evaluate our approach against nine baselines: (1) geometry-based autoencoders (SPAE \citep{SPAE}, TAE \citep{TAE}), (2) gradient-based autoencoders (IRAE \citep{IRAE}, GAE \citep{geometricAE}, CAE \citep{CAE}), (3) embedding-based autoencoders (GRAE \citep{GRAE}, Diffusion Net (DN) \citep{DN}), (4) a hybrid approach combining graph and gradient regularization (GGAE \citep{GGAE}), and (5) a vanilla autoencoder. We conduct a grid search over hyperparameters for each model-dataset combination and report the best performance. Detailed implementation settings are provided in Appendix \ref{appendix:parameter}.

\textbf{Evaluation metric.}
We assess model performance through both reconstruction accuracy and geometric preservation. Reconstruction fidelity is quantified by Mean Squared Error (MSE) between original samples and reconstructions. Geometric preservation is evaluated using two metrics: (1) $k$-NN recall \citep{knn1,knn2}, measuring neighborhood correspondence between latent and original spaces, and (2) $\text{KL}_\sigma$ divergence \citep{KL} with bandwidths $\sigma \in \{0.01, 0.1, 1\}$, assessing similarity of distance distributions across scales. To better evaluate manifold structures, we adapt these metrics to use geodesic distances derived from similarity graphs rather than the original Euclidean formulations. See Appendix~\ref{appendix:metrics} for details.


Table~\ref{tab:summary_ranks} reports the average ranks of all methods across metrics and datasets, providing a compact overview of overall performance that complements the per-dataset comparisons. BLAE achieves the highest average ranking on key metrics, reflecting superior performance in graph geometry preservation, reconstruction fidelity, and downstream task accuracy.
\begin{table}[!thbp]
\centering
\vspace{-10pt}
\caption{Average ranks of evaluation metrics across all datasets (lower is better). Detailed metric values for each dataset are provided in Appendix~\ref{appendix:results} and Appendix~\ref{extra_datasets}. The best is shown in bold.}
\resizebox{\columnwidth}{!}{
\rowcolors{2}{white}{gray!10}
\begin{tabular}{lcccccccccc}
\toprule
Measure & BLAE & SPAE & TAE & DN & GRAE & CAE & GGAE & IRAE & GAE & Vanilla AE \\
\midrule
$k$-NN & \textbf{1.8 $\pm$ 1.3} & 3.2 $\pm$ 1.3 & 3.8 $\pm$ 0.8 & 4.5 $\pm$ 2.7 & 4.0 $\pm$ 2.7 & 5.8 $\pm$ 1.8 & 7.5 $\pm$ 2.1 & 7.2 $\pm$ 0.4 & 9.0 $\pm$ 1.7 & 7.8 $\pm$ 0.8 \\
$\text{KL}_{0.01}$ & \textbf{1.0 $\pm$ 0.0} & 3.0 $\pm$ 0.0 & 2.5 $\pm$ 0.9 & 6.0 $\pm$ 1.2 & 4.5 $\pm$ 1.5 & 7.0 $\pm$ 0.7 & 8.2 $\pm$ 2.5 & 6.8 $\pm$ 2.4 & 6.5 $\pm$ 1.1 & 9.2 $\pm$ 0.4 \\
$\text{KL}_{0.1}$ & \textbf{1.0 $\pm$ 0.0} & 3.2 $\pm$ 1.1 & 2.8 $\pm$ 0.8 & 5.5 $\pm$ 2.2 & 3.5 $\pm$ 0.9 & 7.5 $\pm$ 0.9 & 9.0 $\pm$ 1.7 & 7.2 $\pm$ 1.6 & 6.5 $\pm$ 0.9 & 8.8 $\pm$ 0.4 \\
$\text{KL}_{1}$ & \textbf{1.0 $\pm$ 0.0} & 4.2 $\pm$ 2.3 & 3.2 $\pm$ 1.3 & 5.2 $\pm$ 2.8 & 4.2 $\pm$ 1.9 & 7.0 $\pm$ 1.6 & 8.0 $\pm$ 1.6 & 7.2 $\pm$ 1.3 & 6.0 $\pm$ 2.2 & 8.2 $\pm$ 0.8 \\
MSE & \textbf{1.2 $\pm$ 0.4} & 4.8 $\pm$ 3.3 & 4.0 $\pm$ 0.7 & 5.2 $\pm$ 3.1 & 4.5 $\pm$ 3.0 & 5.5 $\pm$ 1.5 & 7.5 $\pm$ 2.1 & 7.0 $\pm$ 2.1 & 8.0 $\pm$ 1.6 & 7.2 $\pm$ 1.3 \\
Accuracy & \textbf{1} & 5 & 7 & 4 & 6 & 2 & 10 & 8 & 3 & 9 \\
\bottomrule
\end{tabular}
}
\label{tab:summary_ranks}
\vspace{-12pt}
\end{table}

\textbf{Swiss Roll.}
The Swiss Roll dataset consists of a synthetic 2-dimensional manifold embedded in $\R^3$. We construct it by uniformly sampling points from $[-2,10]\times[0,6]$ and isometrically mapping them to $\R^3$. To create a meaningful contrast between Euclidean and geodesic distances, we remove a strip $[1.5,6.5]\times[2.5,3.5]$ from the data (see Appendix \ref{generateswissroll} for details).

As shown in Figure \ref{fig:swissroll}, our BLAE method correctly preserves the geometric structure in latent space. Graph-based architectures (SPAE, TAE, GRAE, DN) successfully unroll the manifold but distort its geometry due to discrepancies between geodesic and Euclidean distances near the removed strip. All gradient-driven models without injectivity constraints fail to preserve the topological structure of the manifold, stemming from their non-injective encoders.
%
Further analysis (Appendix~\ref{swissroll_extension} and \ref{sensitivity}) reveals that gradient-based baselines exhibit strong dependence on the Swiss roll’s geometry (curvature and axis length), while graph-based methods vary with sample size. In contrast, BLAE consistently preserves topology across both geometric and population variations.
\begin{figure}[htbp]
\vspace{-8pt}
    \centering
    \includegraphics[width=0.9\linewidth]{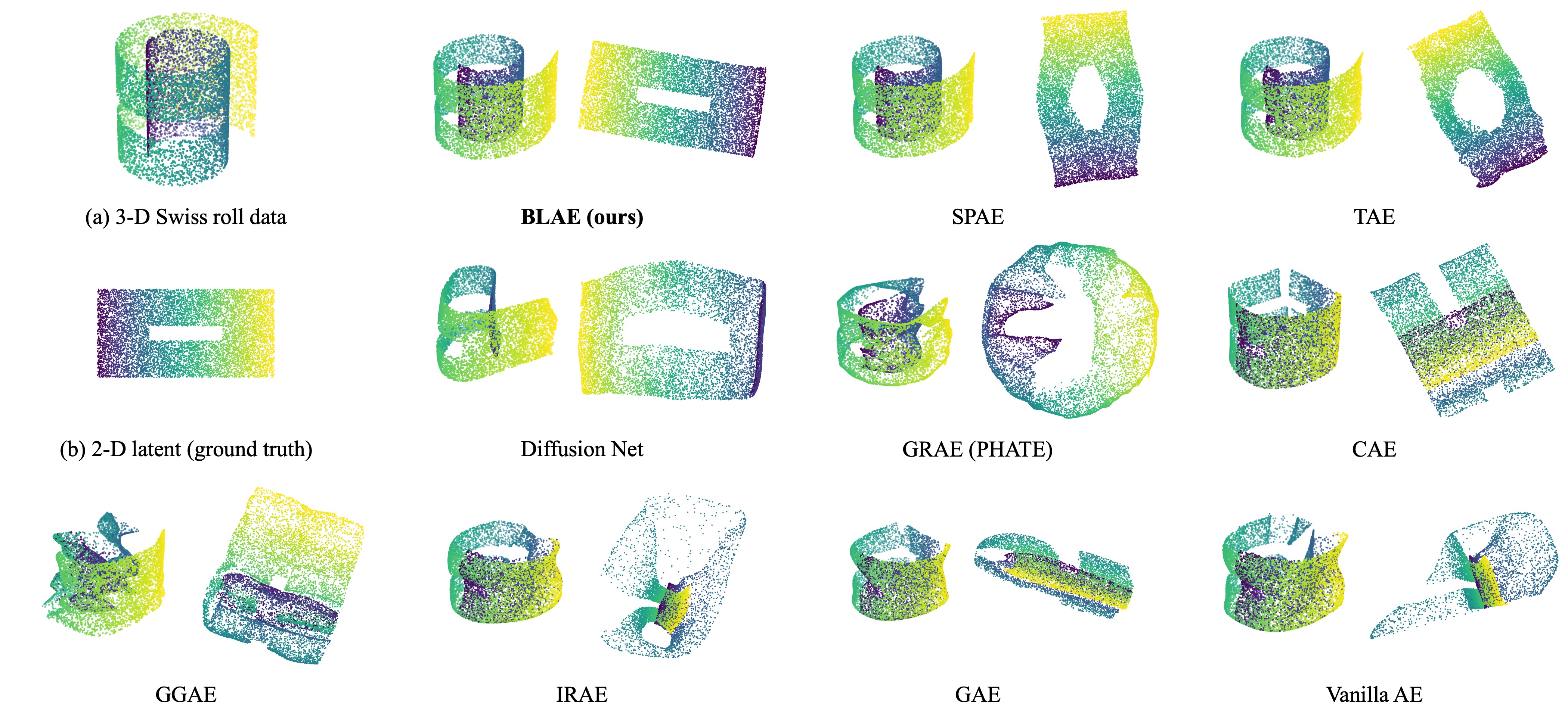}
    \vspace{-12pt}
    \caption{(a) 3-D Swiss roll data. (b) Ground truth: 2-D latent representations to generate a Swiss Roll. \textit{Others:} \new{3-D reconstruction and 2-D latent representations learned by AE methods.}}
    \label{fig:swissroll}
\vspace{-10pt}
\end{figure}

%
%

\textbf{dSprites.}
The dSprites dataset \citep{ref:dsprites} serves as a benchmark for evaluating disentangled representations. It contains 64$\times$64 binary images of three geometric primitives (squares, ellipses, and hearts) generated through systematic variation of five factors: shape, color, orientation, scale, and position $(x,y)$. In our experiments, we fix color, scale, and orientation to (white, 1, 0) and select squares and hearts with all possible positions except a cross-shaped region in the center.

We conduct semi-supervised autoencoder training on this shape-partitioned dataset, creating two distinct data clusters by adding a constant value of 1 to all pixels in square images. Figure \ref{fig:dsprite} shows that only BLAE, SPAE, TAE, CAE, and IRAE successfully reconstruct the topological structure of both clusters, with BLAE exhibiting the least geometric distortion between the parallel planes representing the two shape classes.

\begin{figure}[thbp]
\vspace{-6pt}
    \centering
    \includegraphics[width=0.9\linewidth]{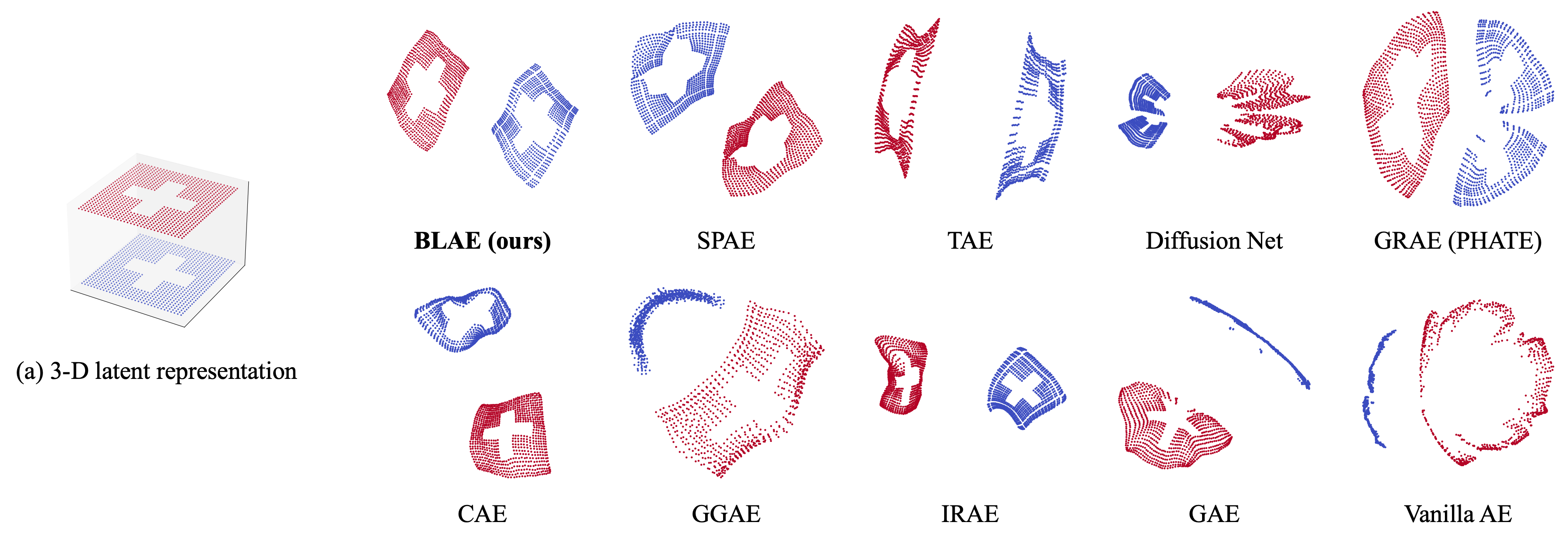}
    \vspace{-10pt}
    \caption{(a) Two parallel planes: 3-D latent representation of square (blue) and heart (red) clusters. \textit{Others:} 2-D latent representations learned by BLAE and other baseline methods.}
    \label{fig:dsprite}
\vspace{-6pt}
\end{figure}

\textbf{MNIST.}
To evaluate robustness against distribution shift, we train models using two different sampling distributions from the same underlying manifold. We generate these datasets using distinct rotation strategies on a 28$\times$28 handwritten digit `3':
\begin{enumerate}[leftmargin=*]
    \item \textbf{Uniform}: Rotate the image in $1^\circ$ increments over the range $(0^\circ, 360^\circ]$, and uniformly sample 25\% of these rotations as the training set (Figure~\ref{fig:mnist}(b)). The remaining are used as the testing set.
    \item \textbf{Non-uniform}: Apply $1^\circ$ rotation steps within the ranges $(0^\circ,30^\circ] \cup (180^\circ,210^\circ]$, and $10^\circ$ steps within $(30^\circ,180^\circ] \cup (210^\circ,360^\circ]$ to obtain the training set, as illustrated in Figure~\ref{fig:mnist}(c). The remaining rotation angles constitute the testing set.
\end{enumerate}

Each rotated image is zoomed to five scales (0.8$\times$, 0.9$\times$, 1.0$\times$, 1.1$\times$, 1.2$\times$), generating 450 samples distributed across an annular manifold (Figure~\ref{fig:mnist}(b)~(c)). Figure \ref{fig:mnist} illustrates the results. Gradient-based models without injectivity constraints struggle to capture the manifold topology, while graph-based methods demonstrate better performance. Notably, although Diffusion Net and TAE preserve topological structure in their latent representations, only BLAE achieves consistent embedding structures—forming concentric circles—across both training distributions, demonstrating its invariance to sampling density variations.

\begin{figure}[thbp]
\vspace{-6pt}
    \centering
    \includegraphics[width=0.9\linewidth]{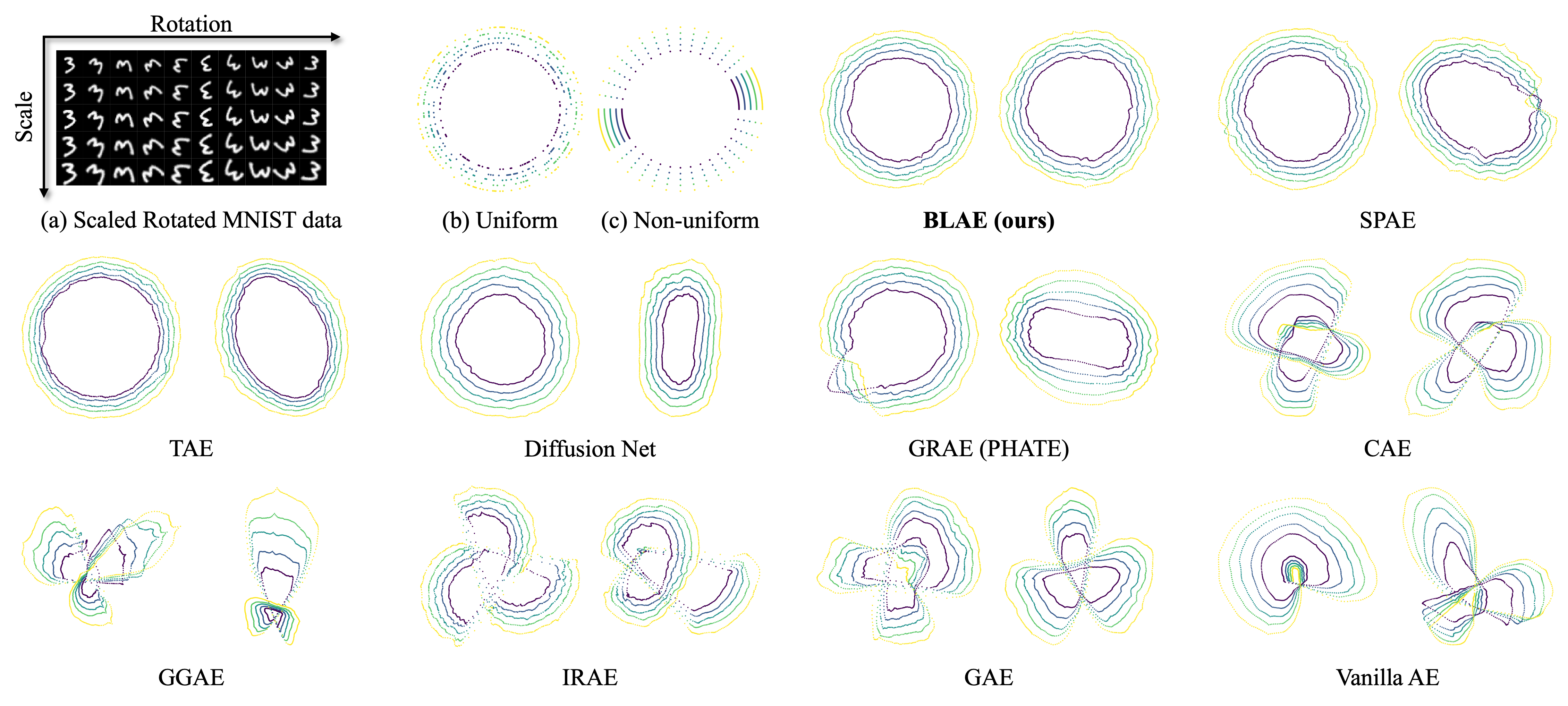}
    \vspace{-10pt}
    \caption{(a) Digit `3' at various scales and rotations. (b)~(c) Ground truth: 2D concentric circle latent representation for uniform and non-uniform training sets. \textit{Others:} 2D latent representations learned by BLAE and baseline methods on uniform (left) and non-uniform (right) training sets.}
    \label{fig:mnist}
\vspace{-6pt}
\end{figure}


For completeness, Appendix~\ref{appendix:results} presents the numerical results for all experiments, Appendix~\ref{appendix:complexity} analyzes the computational complexity, and Appendix~\ref{exp_extension} includes extended experiments such as sensitivity analysis, ablation studies, and downstream classification on additional real-world datasets.

\section{Conclusion}
\label{sec:conclusion}

This work provides both theoretical and empirical foundations for understanding optimization bottlenecks in autoencoders. We demonstrate that the non-injective nature of standard encoders fundamentally induces local minima entrapment, explaining the suboptimal convergence observed in conventional formulations. To address this fundamental limitation, we introduce two key innovations. First, a novel injective regularization framework based on separation criteria that enforces topological consistency between input and latent spaces. Second, a bi-Lipschitz geometric constraint that ensures admissible latent space construction even under aggressive dimensionality reduction.
Our Bi-Lipschitz Autoencoder (BLAE) demonstrates state-of-the-art performance in geometric structure preservation across multiple datasets. Importantly, BLAE exhibits significantly enhanced robustness to distribution shifts and low-sample regimes compared to existing approaches, validating our theoretical analysis.




Although this paper primarily compares deterministic autoencoders for manifold learning, it is closely related to deep generative modeling. Many commonly used deep generative models struggle to learn distributions supported on unknown low-dimensional manifolds due to dimensionality mismatch and the phenomenon of manifold overfitting, where likelihood-based models fail to capture the true distribution on the intrinsic manifold~\cite{generative}. One common strategy to introduce manifold awareness is to adopt two-step generative architectures, which first perform generative modeling in a low-dimensional latent space and then map samples back to the data space. A variety of existing works~\cite{li2015generative, dao2023flow, xiao2019generative} provide empirical and theoretical evidence that training generative models in a learned latent representation improves performance compared to training directly in the ambient data space, and it is expected that generative models built on better latent representations achieve even stronger performance. In Appendix~\ref{appendix:vae}, we integrate our BLAE framework with a variational autoencoder (VAE), which improves performance compared to standard VAE, further validating that our method is effective for latent generative models as well.

One limitation of training BLAE, common to graph-based methods, lies in the need to precompute the geodesic distance matrix, whose time and space complexities grow quadratically with the number of data points. As a result, on larger-scale datasets (e.g., ImageNet), techniques are needed to mitigate this overhead. For example, one can construct the distance matrix only on a subset of points, and approximate the geodesic distance between any two points using the distance between their nearest neighbors within this subset.

\section*{Acknowledgments}
This work was supported in part by NIH grants U01 AG068057, U01 AG066833, and R01 EB037101. The content is solely the responsibility of the authors and does not necessarily represent the official views of the NIH.

\newpage
\bibliography{iclr2026_conference}
\bibliographystyle{iclr2026_conference}

\appendix
\newpage
\section{Theoretical Proofs}
\label{proof:all}

\subsection{Proof of Theorem \ref{separation}}
\begin{proof} \label{proof:separation}
    ($\Leftarrow$) 
    $\forall x\neq y\in\M$, choose $\delta=d_\M(x,y)$, there exists $\epsilon>0$, such that $f$ is $(\delta,\epsilon)$-separated, then
    \begin{align}
        \frac{d_\N(f(x),f(y))}{d_\M(x,y)}>\epsilon.
    \end{align}
    Therefore, $d_\N(f(x),f(y))>\epsilon\cdot d_\M(x,y)=\epsilon \cdot \delta>0$, i.e. $f(x)\neq f(y)$. So $f$ is an injection. Note that the sufficiency does not require any assumption.
    
   ($\Rightarrow$) Because $\M$ is compact, $\M\times\M$ is compact as well. For any $\delta>0$, let
   \begin{align}
       C_\delta:=\big\{ (x,y)\in\M\times\M\ \big|\ d_\M(x,y)\ge\delta\big\}.
   \end{align}
The continuity and injectivity of $f$ imply that $d_\N(f(x),f(y))\slash d_\M(x,y)$ is continuous and positive on $C_\delta$. Note that $C_\delta$ is a closed subset of $\M\times\M$, and hence also compact. Therefore, there exists $(x_*,y_*)\in C_\delta$ such that 
    \begin{align}
        \frac{d_\N(f(x_*),f(y_*))}{d_\M(x_*,y_*)} = \inf_{(x,y)\in C_\delta} \frac{d_\N(f(x),f(y))}{d_\M(x,y)}.
    \end{align}\
    Let $\epsilon=d_\N(f(x_*),f(y_*))\slash (2d_\M(x_*,y_*))$, then $f$ is $(\delta,\epsilon)$-separated. 
\end{proof}

\subsection{Proof of Theorem \ref{thm:admissible}}
\begin{proof}
    \label{proof:admissible}
    Because $R$ has a global minimum, we know that 
    \begin{align}
        U_{\min}=\arg\min_u R(u)\neq \varnothing.
    \end{align}
    $\forall u_*\in U_{\min}$, we have $ R(f(x))- R(u_*)\ge0$ for any $x\in \M$. Let $f$ be a minimizer of 
    \begin{align}
    \label{eq25}
        \min_{f_\Theta\in \mathcal{F}_\Theta}\E_{x\sim\Prob} \big[ R(f_\Theta(x))\big],
    \end{align}
    then
    \begin{align}
        0=\E_{x\sim\Prob} \big[ R(f(x))\big]- R(u_*) 
        &= \E_{x\sim\Prob} \big[ R(f_\Theta(x))-R(u_*)\big]\ge 0.
    \end{align}
    This implies that $R(f(x))- R(u_*)=0$ a.s.-$\Prob$, which means $f(x)\in U_{\min}$ a.s. on $\M$, since $\Prob$ is strictly positive. Therefore, $f(x)\in U_{\min}$ a.s. on $\M$. On the other hand, if $f(x)\in U_{\min}$ a.s. on $\M$, then since $\Prob$ is absolutely continuous, we have that $f(x)\in U_{\min}$ a.s.-$\Prob$. Hence, it is obvious that $f$ is a minimizer of \eqref{eq25}, so 
    \begin{align}
        S_\Prob = \{f_\Theta\in\mathcal{F}_\Theta\ |\ f_\Theta(x)\in U_{\min}\ \text{ a.s. on $\M$}\}.
    \end{align}
    Similarly, for another strictly positive and absolutely continuous probability measure $\mathbb{Q}$,
    \begin{align}
        S_\mathbb{Q} = \{f_\Theta\in\mathcal{F}_\Theta\ |\ f_\Theta(x)\in U_{\min}\ \text{ a.s. on $\M$}\},
    \end{align}
    i.e. $S_\Prob=S_\mathbb{Q}$ which proves the admissibility.
\end{proof}

\subsection{Proof of Theorem \ref{lip2sing}} \label{proof:lip2sing}
Before proving Theorem \ref{lip2sing}, we first introduce a auxiliary lemma:
\begin{lemma}\label{singular} Given the assumptions in Theorem \ref{lip2sing}, we have
    \begin{align}
        \sigma_{\max}(J^\M_f(x))= \max_{v\in T_x\M\backslash\{\textbf{0}\}} \frac{\|J_f(x)v\|}{\|v\|}, \\
        \sigma_{\min}(J^\M_f(x))= \min_{v\in T_x\M\backslash\{\textbf{0}\}} \frac{\|J_f(x)v\|}{\|v\|}.
    \end{align}
  \begin{proof}
      Let $k=$ dim($T_x\M$), choose $\{v_1,\cdots,v_k\}$ as an orthonormal basis of $T_x\M$. Then we have the following (basis-dependent) representation:
      \begin{align}
          J^\M_f(x)=\big[J_f(x)v_1 | \cdots | J_f(x)v_k\big]=J_f(x)V.
      \end{align}
      Note that $J^\M_f(x)$ is an $n\times k$ matrix and $k\le n$ because $f|_M$ is a diffeomorphism, so it has exactly $k$ singular values. We denote them as $\sigma_1\ge\cdots\ge\sigma_k$. By singular value decomposition
      \begin{align}
          J^\M_f(x)= P_{n\times n} \Sigma_{n\times k} Q^\top_{k\times k},
      \end{align}
      where $P,Q$ are orthonormal matrices and $\Sigma=\text{diag}(\sigma_1,\cdots,\sigma_k)$. Then
      \begin{align}
          V^\top J_f(x)^\top J_f(x) V = J^\M_f(x)^\top J^\M_f(x)=Q\Sigma^\top\Sigma Q^\top.
      \end{align}
      So by eigenvalue decomposition,
      \begin{equation}
      \begin{aligned}
        \sigma^2_{\max}(J^\M_f(x)) 
        &= \max_{u\in\R^k\backslash\{\textbf{0}\}} \frac{u^\top\Sigma^\top\Sigma u}{\|u\|^2}\\
        &= \max_{w\in\R^k\backslash\{\textbf{0}\}} \frac{w^\top Q\Sigma^\top\Sigma Q^\top w}{\|w\|^2}\\
        &=\max_{w\in\R^k\backslash\{\textbf{0}\}} \frac{w^\top V^\top J_f(x)^\top J_f(x) V w}{\|w\|^2}\\
        &=\max_{v\in T_x\M\backslash\{\textbf{0}\}} \frac{v^\top J_f(x)^\top J_f(x) v}{\|v\|^2}\\
        &=\max_{v\in T_x\M\backslash\{\textbf{0}\}} \frac{\|J_f(x)v\|^2}{\|v\|^2},
      \end{aligned}
      \end{equation}
      i.e. 
      \begin{align}
          \sigma_{\max}(J^\M_f(x)) = \max_{v\in T_x\M\backslash\{\textbf{0}\}} \frac{\|J_f(x)v\|}{\|v\|}.
      \end{align}
      Similarly,
      \begin{align}
          \sigma_{\min}(J^\M_f(x)) = \min_{v\in T_x\M\backslash\{\textbf{0}\}} \frac{\|J_f(x)v\|}{\|v\|}.
      \end{align}
  \end{proof}
\end{lemma}

\begin{proof}[Proof of Theorem \ref{lip2sing}]
     ($\Rightarrow$)\ Suppose $f$ is $\kappa$-bi-Lipschitz,  
        $\forall x\in$ int $\M$, consider a unit vector $v\in T_x\M$. Let $\gamma: (-\varepsilon,\varepsilon)\rightarrow \M$ be a smooth curve with $\gamma(0)=x$ and $\gamma'(0)=v$. By the chain rule:
        \begin{align}
            (f\circ\gamma)'(0)=J_f(x)v.
        \end{align}
        For $|t|<\varepsilon$, the bi-Lipschitz condition implies that 
        \begin{align}
            \frac{1}{\kappa}\cdot d_\M(\gamma(t),x)\le d_\N(f(\gamma(t)),f(x))\le \kappa\cdot  d_\M(\gamma(t),x).
        \end{align}
        Through dividing by $|t|$ and taking $t\rightarrow 0$, we obtain:
        \begin{align}
        \label{1}
            \frac{1}{\kappa} \|v\|\le \|J_f(x)v\|\le \kappa \|v\|,\quad \forall v\in T_x\M.
        \end{align}
        For $v\neq \textbf{0}$, dividing \eqref{1} by $\|v\|$ and taking maximum (minimum) give:
        \begin{align}
            \frac{1}{\kappa}\le \min_{v\in T_x\M\backslash\{\textbf{0}\}} \frac{\|J_f(x)v\|}{\|v\|} \le
            \max_{v\in T_x\M\backslash\{\textbf{0}\}} \frac{\|J_f(x)v\|}{\|v\|} \le \kappa.
        \end{align}
        By lemma \ref{singular},
        \begin{align}\label{eq22}
            \frac{1}{\kappa}\le\sigma_{\min}(J_f(x))\le\sigma_{\max}(J_f(x))\le\kappa,\quad \forall x\in\text{int}\ \M.
        \end{align}
        By Weyl's inequality, \eqref{eq22} holds for all $x\in \overline{\text{int}\ \M}=\M$.
        
    ($\Leftarrow$)\ Let $x,y\in\M$ and suppose $\gamma:[0,1]\rightarrow\M$ is a smooth path from $x$ to $y$ with $\|\gamma'\|>0$ (such path must exist because $\M$ is path-connected). Then the image of $\gamma$ is a path from $f(x)$ to $f(y)$.
    \begin{equation}
    \begin{aligned}
        \text{Length}(f|_\M\circ\gamma)
        &=\int_0^1 \|(f|_\M\circ\gamma)'(s)\|ds\\
        &=\int_0^1 \|J_f(\gamma(s))\gamma'(s)\|ds\quad (\gamma'(s)\in T_s\M)\\
        &=\int_0^1 \frac{\|J_f(\gamma(s))\gamma'(s)\|}{\|\gamma'(s)\|}\cdot \|\gamma'(s)\|ds  \\
        &\le\int_0^1 \max_{v\in T_s\M\backslash\{\textbf{0}\}}\frac{\|J_f(\gamma(s))v\|}{\|v\|}\cdot\|\gamma'(s)\| ds\\
        &\le\int_0^1 \kappa\cdot\|\gamma'(s)\| ds\\
        &=\kappa\cdot\text{Length}(\gamma).
    \end{aligned}
    \end{equation}
    Because $f|_\M$ is a diffeomorphism, $f|_\M^{-1}$ exists and is smooth, hence for any smooth path $\beta:[0,1]\rightarrow f(\M)$ from $f(x)$ to $f(y)$, $f^{-1}|_\M\circ\beta$ must be a smooth path from $x$ to $y$. So, by the definition of geodesic distance,
    \begin{equation}
    \begin{aligned}
        d_\N(f(x),f(y))
        &=\inf_{\beta} \text{Length}(\beta)\\
        &=\inf_{\beta} \text{Length}(f|_\M\circ f|_\M^{-1}\circ\beta)\\
        &=\inf_{\gamma}\text{Length}(f|_\M\circ\gamma)\\
        &\le \kappa\cdot\inf_{\gamma} \text{Length}(\gamma) \\
        &= \kappa\cdot d_\M(x,y).
    \end{aligned}
    \end{equation}
    Similarly, we can prove that $d_\N(f(x),f(y))\ge \frac{1}{\kappa}\cdot d_\M(x,y)$ which completes the proof.
\end{proof}

\subsection{Proof of Theorem \ref{thm:embedding}}
Although this theorem can be proved in just a few lines using Proposition C.29 and the Inverse Function Theorem in \citep{lee}, for the sake of completeness we present here a more detailed proof.
\begin{proof}\label{proof:embedding}
    Suppose $f$ embeds $\M$ into $R^n$. The Whitney's embedding theorem guarantees that such an embedding exists and $n\le 2k$. On the other hand, the embedding dimension obviously cannot be less than $k$. It remains to show that $f$ is bi-Lipschitz. Since $f$ is an embedding, we know that the linear mapping $J_f(x): T_x\M\rightarrow T_{f(x)}f(\M)$ is injective at every point $x\in\M$. Thereby, for any unit vector $v\in T_x\M$,
    \begin{align}
        \|J_f(x)v\|>0,
    \end{align}
    Because $V:=\{v:\big|\ v\in T_x\M,\|v\|=1\}$ is compact, there must exist a $v_*\in V$ such that
    \begin{align}
        \sigma_{\min}(J_f^\M(x))=\min_{v\in T_x\M\backslash\{\textbf{0}\}}\frac{\|J_f(x)v\|}{\|v\|}=\min_{v\in V} \|J_f(x)v\|=\|J_f(x)v_*\|>0.
    \end{align}
    By Weyl's inequality (which implies the continuity of singular values of $J_f$), there exists a neighborhood $U_x$ of $x$, such that $\forall y\in U_x$,
    \begin{align}
        \sigma_{\min}(J^\M_f(y))\ge \frac{1}{2}\sigma_{\min}(J^\M_f(x)),\\
        \sigma_{\max}(J^\M_f(y))\le 2\sigma_{\max}(J^\M_f(x)).
    \end{align}
    Choose $\kappa(x)=\max\{2\sigma_{\max}(J^\M_f(x)),2\slash\sigma_{\min}(J^\M_f(x))\}$, it is easy to see that $\kappa(x)\ge1$, we have:
    \begin{align}
        \frac{1}{\kappa(x)}\le\sigma_{\min}(J^\M_f(y))\le\sigma_{\max}(J^\M_f(y)) \le\kappa(x),\quad \forall y\in U_x.
    \end{align}
    Notice that $\{U_x|x\in\M\}$ is a open cover of $\M$, since $\M$ is compact, we can find a finite sub-cover $\{U_{x_1},\cdots,U_{x_N}\}$. Choose $\kappa=\max\{\kappa(x_1),\cdots,\kappa(x_N)\}$, then,
    \begin{align}
        \frac{1}{\kappa}\le\sigma_{\min}(J^\M_f(y))\le\sigma_{\max}(J^\M_f(y)) \le\kappa,\quad \forall y\in \M.
    \end{align}
    By Theorem \ref{lip2sing}, $f$ is $\kappa$-bi-Lipschitz.
\end{proof}

\section{Experimental Details}

\subsection{Hyperparameter}
\label{appendix:parameter}




We implemented dataset-specific autoencoders tailored to the structural characteristics of each dataset. For the Swiss Roll and ssREAD datasets, we used a fully connected autoencoder with two hidden layers of 256 units and ELU activations in both the encoder and decoder. The encoder projected the input—either 3D coordinates (Swiss Roll) or 50-dimensional PCA-reduced features (ssREAD)—into a 2-dimensional latent space, which the decoder then used to reconstruct the original input. For Swiss Roll, models were trained for 3000 epochs using the Adam optimizer with weight decay $1\times 10^{-5}$ and an initial learning rate of $2\times 10^{-3}$, reduced by a factor of 0.1 every 1000 epochs. For ssREAD, models were trained for 1500 epochs with Adam, weight decay $1\times 10^{-5}$, and an initial learning rate of $1\times 10^{-3}$, reduced by a factor of 0.1 every 500 epochs.

For the dSprites dataset, which is detailed in Appendix~\ref{extra_datasets}, we adopted a convolutional autoencoder based on the ConvNet64 and DeConvNet64 architectures. The encoder consisted of five convolutional layers with increasing channel widths—from $n_h$ to $4\times n_h$—followed by a $1\times 1$ convolution that mapped the feature maps to a 2D latent representation. The decoder mirrored this structure using transposed convolutions to reconstruct the original $64\times 64$ binary image. Training was conducted for 1000 epochs using Adam with weight decay $1\times 10^{-5}$ and an initial learning rate of $1\times 10^{-3}$.

For the MNIST dataset, we employed a convolutional autoencoder optimized for $28\times 28$ grayscale images. The encoder consisted of two convolutional layers with ReLU activations and max pooling, followed by fully connected layers that compressed the input into a 2D latent vector. The decoder performed the inverse, using fully connected layers and transposed convolutions to reconstruct the image. Models were trained for 3000 epochs with Adam, weight decay $1\times 10^{-5}$, and an initial learning rate of $1\times 10^{-2}$, reduced by a factor of 0.5 every 300 epochs.

All models were designed to produce a 2-dimensional latent space to facilitate direct visualization and consistent comparison.
\begin{table}[htbp]
\centering
\caption{Hyperparameter settings for baselines across all evaluated datasets.}
\small
\begin{tabular}{c|c|cccccccc}
\toprule
Dataset & Parameter & SPAE & TAE & DN & GRAE & CAE & GGAE & IRAE & GAE   \\
\midrule
\multirow{4}*{Swiss Roll}
&$\lambda$& 2 & 5 & 100 & 100 & 0.1 & 1 & 0.01 & 0.01 \\
&$\eta$ & $\slash$ & $\slash$ & 0.001 & $\slash$ & $\slash$ & $\slash$ & 0 & $\slash$  \\
&n\_neighbor& $\slash$ & $\slash$ & 10 & 5 & $\slash$ & 10 & $\slash$ & $\slash$   \\
&bandwidth& $\slash$ & $\slash$ & $\slash$ & $\slash$ & $\slash$ & 2 & $\slash$ & $\slash$   \\
\midrule
\multirow{4}*{dSprites}
&$\lambda$& 10 & 0.01 & 0.001 & 0.1 & 0.1 & 0.1 & 0.1 & 0.01 \\
&$\eta$& $\slash$ & $\slash$ & 0.1 & $\slash$ & $\slash$ & $\slash$ & 0 & $\slash$\\
&n\_neighbor& $\slash$ & $\slash$ & 1000 & 6 & $\slash$ & 5 & $\slash$ & $\slash$\\
&bandwidth& $\slash$ & $\slash$ & $\slash$ & $\slash$ & $\slash$ & 5 & $\slash$ & $\slash$  \\
\midrule
\multirow{4}*{MNIST}
&$\lambda$ & 5 & 10 & 0.01 & 0.01 & 0.1 & 0.1 & 0.1 & 0.01  \\
&$\eta$ & $\slash$ & $\slash$ & 0.1 & $\slash$ & $\slash$ & $\slash$ & 0 & $\slash$ \\
&n\_neighbor & $\slash$ & $\slash$ & 50 & 5 & $\slash$ & 15 & $\slash$ & $\slash$ \\
&bandwidth & $\slash$ & $\slash$ & $\slash$ & $\slash$ & $\slash$ & 0.01 & $\slash$ & $\slash$ \\
\midrule
\multirow{4}*{ssREAD}
&$\lambda$ & 10 & 100 & 0.01 & 0.01 & 1 & 0.01 & 1 & 1 \\
&$\eta$ & $\slash$ & $\slash$ & 0.1 & $\slash$ & $\slash$ & $\slash$ & 0.2 & $\slash$  \\
&n\_neighbor & $\slash$ & $\slash$ & 5 & 5 & $\slash$ & 10 & $\slash$ & $\slash$  \\
&bandwidth & $\slash$ & $\slash$ & $\slash$ & $\slash$ & $\slash$ & 0.01 & $\slash$ & $\slash$ \\
\bottomrule  
\end{tabular}
\label{tab:parameter_baseline}
\end{table}

The hyperparameters used for the baseline models on each dataset are listed in Table~\ref{tab:parameter_baseline}. These were selected via grid search: $\lambda \in \{0.01, 0.1, 1, 2, 5, 10, 100\}$, $\eta \in \{0.001, 0.01, 0.1, 0.2, 0.5, 1\}$, number of neighbors $\in \{5, 6, 7, 8, 9, 10, 15, 50, 100, 1000\}$, and GGAE bandwidth $\in \{0.01, 0.1, 1, 2, 5, 10\}$.

In our method, $\kappa$ is treated as a tunable hyperparameter. A simple approach is to apply a standard grid search, but a more efficient alternative is to use a binary search over a given interval (e.g., $[1,5]$). Starting from the midpoint, if the trained model produces a nonzero Bi-Lipschitz loss (above a tolerance such as $10^{-4}$), this indicates that the current $\kappa$ is too small and should be increased; if the loss is effectively zero, the condition is satisfied and $\kappa$ can be accepted or further reduced. In practice, the guiding principle is to select the smallest $\kappa$ for which the Bi-Lipschitz loss remains below the threshold. Alternatively, if one wishes to enforce a specific distortion bound $K$ between the latent representations and the input space, $\kappa$ can be directly set to $K$, thereby ensuring the constraint while maintaining robustness to distributional shift.

The hyperparameters used for our model (BLAE) are provided in Table~\ref{tab:parameter_blae}.

\begin{table}[htbp]
\centering
\caption{Hyperparameter settings for BLAE across all evaluated datasets.}
\small
\rowcolors{2}{white}{gray!10}
\begin{tabular}{c|cccccccc}
\toprule
Datasets & Swiss Roll & dSprites & MNIST & ssREAD\\
\midrule
$\lambda_{\text{reg}}$ & 1 & 2 & 30 & 2 \\
$\lambda_{\text{bi-Lip}}$ & 0.3 & 0.1 & 0.1 & 0.1\\
$\kappa$  & 1 & 1.1 & 2 & 1.2\\
$\epsilon$ & 0.3 & 0.3 & 0.6 & 0.6\\
\bottomrule

\end{tabular}
\label{tab:parameter_blae}
\end{table}

\subsection{Evaluation Metrics}
\label{appendix:metrics}

We evaluate the performance of each model using three metrics: mean squared error (MSE), $k$-NN recall \citep{knn1,knn2}, and $KL_\sigma$ divergence \citep{KL}, where $\sigma \in {0.01, 0.1, 1}$. While these metrics are traditionally defined using Euclidean distance, we adapt them to better capture the underlying manifold structure. Specifically, we compute distances using geodesic metrics: $d_\mathcal{M}$ on the data manifold and $d_\mathcal{N}$ in the latent space. These geodesic distances are approximated by first constructing a neighborhood graph and then computing the shortest paths between node pairs.

\subsubsection{\texorpdfstring{$k$-NN recall}{k-NN recall}}

The $k$-NN recall metric quantifies how well the local neighborhood structure is preserved in the latent space. It measures the proportion of $k$-nearest neighbors on the data manifold that remain among the $k$-nearest neighbors in the latent space. For the Swiss Roll dataset, we report the average $k$-NN recall over $k \in \{5, 10, \dots, 50\}$. For MNIST and dSprites, we use $k \in \{2, 4, \dots, 10\}$.

\subsubsection{\texorpdfstring{$KL_\sigma$}{KL-sigma} Divergence}

The $KL_\sigma$ metric computes the Kullback-Leibler divergence between the normalized density estimates on the data manifold and in the latent space. Let $X = \{x_1, \dots, x_N\}$ denote the original data and $Z = \{z_1, \dots, z_N\}$ their corresponding latent representations. The density at each point is defined by:
\begin{align}
    p_{X,\sigma}(x_i)=\frac{f_{X,\sigma}(x_i)}{\sum_j f_{X,\sigma}(x_j)},\qquad  p_{Z,\sigma}(z_i)=\frac{f_{Z,\sigma}(z_i)}{\sum_j f_{Z,\sigma}(z_j)},
\end{align}
where the unnormalized densities are computed as:
\begin{equation}
\begin{aligned}
    f_{X,\sigma}(x_i)=\sum_j \exp \left(-\frac{1}{\sigma}\left( \frac{d_\M(x_i,x_j)}{\max_{i,j}d_\M(x_i,x_j)} \right)^2 \right),\\
    f_{Z,\sigma}(z_i)=\sum_j \exp \left(-\frac{1}{\sigma}\left( \frac{d_\N(z_i,z_j)}{\max_{i,j}d_\N(z_i,z_j)} \right)^2 \right).
\end{aligned}
\end{equation}
The final metric is given by $KL_\sigma = D_{\mathrm{KL}}(p_{X,\sigma} \| p_{Z,\sigma})$. Smaller values of $\sigma$ emphasize local structure, while larger values reflect global geometry.

\subsection{Generation of Swiss Roll Data}
\label{generateswissroll}

We use the logarithmic spiral $r=e^{b\theta} \ (b\neq0)$ to construct the Swiss Roll dataset. The arc length of the spiral from angle $\theta_1$ to $\theta_2$ is given by:
\begin{equation}
\begin{aligned}
    s(\theta_2)-s(\theta_1) 
    &= \int_{\theta_1}^{\theta_2} 1\ \text{d}s\\
    &= \int_{\theta_1}^{\theta_2} \sqrt{r^2(\theta)+r'^2(\theta)}\text{d}\theta\\
    &= \int_{\theta_1}^{\theta_2} e^{b\theta} \sqrt{1+b^2} \text{d}\theta\\
    &= \frac{\sqrt{1+b^2}}{b}(e^{b\theta_2}-e^{b\theta_1}).
\end{aligned}
\end{equation}
Fixing the starting point at $\theta_1=0$ and allowing the negative arc length to be negative, we obtain the arc length as a function of $\theta$:
\begin{align}
    s(\theta)=\frac{\sqrt{1+b^2}}{b}(e^{b\theta}-1),
\end{align} 
which leads to the inverse function:
\begin{align}
    \theta(s)=\frac{1}{b}\log(\frac{bs}{\sqrt{1+b^2}}+1).
\end{align}

This yields an isometric parameterization of the logarithmic spiral over the interval $(-\frac{\sqrt{1+b^2}}{b},+\infty)$:
\begin{align}
    r(s) = e^{b\theta(s)}.
\end{align}
To generate the Swiss Roll, we first uniformly sample points $(s,z)\in[-2,10]\times[0,6]$, then remove points within the rectangular strip $[1.5,6.5]\times[2.5,3.5]$. The remaining points are embedded isometrically into $\R^3$ via: 
\begin{align}
    (s,z) \rightarrow (e^{b\theta(s)}\cos (\theta(s)),e^{b\theta(s)}\sin (\theta(s)), z).
\end{align}

In Section~\ref{sec:experiment}, we set $b=0.1$.

\subsection{Quantitative results}
\label{appendix:results}



In this section, we present quantitative results, including mean squared error (MSE), $k$-nearest-neighbor ($k$-NN) recall, and KL divergence ($KL_\sigma$) under multiple bandwidths $\sigma \in \{0.01, 0.1, 1\}$, for the Swiss Roll (Table~\ref{tab:swissroll}), dSprites (Table~\ref{tab:dsprites}), and MNIST datasets(Table~\ref{tab:mnist_uniform} and Table~\ref{tab:mnist_nonuniform}). For dSprites, evaluation is performed specifically on the withheld cross-shaped regions to assess model generalization to out-of-distribution samples. For each cluster, we compute MSE, $k$-NN recall, and $KL_\sigma$, and report the results as the mean $\pm$ standard deviation over five independent runs, with the final score for each model given by the average across clusters. The best performance for each metric is highlighted in bold.

\begin{table}[!htbp]
    \centering
    \caption{Evaluation metrics (mean $\pm$ standard deviation over 5 runs) on the Swiss Roll dataset. For MSE and KL metrics, lower values are better; for $k$-NN, higher values are better.}
    \resizebox{\columnwidth}{!}{
    \rowcolors{2}{white}{gray!10}
    \begin{tabular}{lccccc}
\toprule
Model & $k$-NN($\uparrow$) & KL$_{0.01}$($\downarrow$) & KL$_{0.1}$($\downarrow$) & KL$_{1}$($\downarrow$) & MSE($\downarrow$) \\
\midrule
BLAE & \textbf{9.61e-01 $\pm$ 3.59e-03} & \textbf{1.28e-04 $\pm$ 2.95e-05} & \textbf{1.78e-05 $\pm$ 5.98e-06} & \textbf{1.39e-06 $\pm$ 1.02e-06} & \textbf{9.36e-05 $\pm$ 1.92e-05} \\
SPAE & 8.95e-01 $\pm$ 2.90e-02 & 7.95e-03 $\pm$ 1.13e-02 & 5.15e-03 $\pm$ 8.95e-03 & 4.73e-04 $\pm$ 8.81e-04 & 2.10e-03 $\pm$ 3.25e-03 \\
TAE  & 8.14e-01 $\pm$ 1.54e-02 & 4.28e-03 $\pm$ 7.20e-04 & 1.57e-03 $\pm$ 6.38e-04 & 5.64e-05 $\pm$ 1.72e-05 & 6.94e-03 $\pm$ 5.07e-03 \\
DN   & 7.22e-01 $\pm$ 1.58e-02 & 5.26e-02 $\pm$ 7.95e-03 & 1.32e-02 $\pm$ 7.50e-03 & 7.43e-04 $\pm$ 7.03e-04 & 1.02e-01 $\pm$ 1.73e-02 \\
GRAE & 5.70e-01 $\pm$ 3.03e-02 & 4.09e-02 $\pm$ 1.26e-02 & 9.94e-03 $\pm$ 4.36e-03 & 8.34e-04 $\pm$ 4.11e-04 & 7.72e-02 $\pm$ 2.16e-02 \\
CAE  & 6.14e-01 $\pm$ 1.71e-02 & 6.11e-02 $\pm$ 1.66e-02 & 4.10e-02 $\pm$ 7.76e-03 & 2.31e-03 $\pm$ 4.27e-04 & 5.37e-02 $\pm$ 7.41e-03 \\
GGAE & 6.61e-01 $\pm$ 1.22e-01 & 4.04e-02 $\pm$ 2.52e-02 & 1.80e-02 $\pm$ 1.09e-02 & 1.18e-03 $\pm$ 7.70e-04 & 7.28e-02 $\pm$ 4.96e-02 \\
IRAE & 5.88e-01 $\pm$ 1.45e-02 & 2.55e-01 $\pm$ 2.15e-02 & 6.75e-02 $\pm$ 8.34e-03 & 2.31e-03 $\pm$ 4.82e-04 & 4.06e-02 $\pm$ 2.08e-03 \\
GAE  & 5.03e-01 $\pm$ 3.21e-02 & 6.96e-02 $\pm$ 3.56e-02 & 2.73e-02 $\pm$ 1.28e-02 & 1.92e-03 $\pm$ 4.27e-04 & 4.80e-02 $\pm$ 6.22e-04 \\
Vanilla AE  & 5.18e-01 $\pm$ 2.90e-02 & 2.00e-01 $\pm$ 9.14e-02 & 5.41e-02 $\pm$ 2.07e-02 & 2.05e-03 $\pm$ 6.25e-04 & 4.14e-02 $\pm$ 3.09e-03 \\
\bottomrule
\end{tabular}
    }
    \label{tab:swissroll}
\end{table}

\begin{table}[!htbp]
\centering
\caption{Evaluation metrics (mean $\pm$ standard deviation over 5 runs) on the dSprites dataset. For MSE and KL metrics, lower values are better; for $k$-NN, higher values are better.}
\resizebox{\columnwidth}{!}{
\rowcolors{2}{white}{gray!10}
\begin{tabular}{lccccc}
\toprule
Model & $k$-NN($\uparrow$) & KL$_{0.01}$($\downarrow$) & KL$_{0.1}$($\downarrow$) & KL$_{1}$($\downarrow$) & MSE($\downarrow$) \\
\midrule
BLAE     & \textbf{7.39e-01 $\pm$ 5.17e-03} & \textbf{6.42e-03 $\pm$ 9.52e-04} & \textbf{3.98e-03 $\pm$ 1.84e-04} & \textbf{3.79e-05 $\pm$ 1.74e-06} & \textbf{1.69e-02 $\pm$ 1.46e-03} \\
SPAE     & 7.24e-01 $\pm$ 4.60e-03 & 2.95e-02 $\pm$ 4.67e-04 & 9.11e-03 $\pm$ 5.36e-05 & 7.83e-05 $\pm$ 4.92e-07 & 1.72e-02 $\pm$ 1.02e-03 \\
TAE    & 5.58e-01 $\pm$ 5.09e-02 & 3.15e-02 $\pm$ 2.44e-02 & 1.46e-02 $\pm$ 5.37e-03 & 1.25e-03 $\pm$ 2.23e-03 & 2.84e-02 $\pm$ 4.65e-03 \\
DN     & 4.81e-01 $\pm$ 4.59e-02 & 8.41e-02 $\pm$ 4.45e-02 & 7.32e-02 $\pm$ 5.16e-02 & 6.38e-03 $\pm$ 5.67e-03 & 3.47e-02 $\pm$ 3.55e-03 \\
GRAE     & 5.23e-01 $\pm$ 1.99e-02 & 2.94e-02 $\pm$ 4.97e-04 & 9.10e-03 $\pm$ 5.59e-05 & 7.84e-05 $\pm$ 4.72e-07 & 3.06e-02 $\pm$ 8.94e-03 \\
CAE      & 6.79e-01 $\pm$ 6.31e-02 & 3.99e-02 $\pm$ 2.38e-02 & 2.05e-02 $\pm$ 2.54e-02 & 1.79e-03 $\pm$ 3.83e-03 & 2.59e-02 $\pm$ 8.76e-03 \\
GGAE     & 4.36e-01 $\pm$ 1.10e-01 & 1.83e-01 $\pm$ 5.64e-02 & 2.95e-01 $\pm$ 1.33e-01 & 2.72e-03 $\pm$ 1.57e-03 & 5.19e-02 $\pm$ 6.98e-03 \\
IRAE     & 4.89e-01 $\pm$ 2.25e-02 & 1.11e-01 $\pm$ 3.51e-02 & 3.13e-02 $\pm$ 1.98e-02 & 3.32e-03 $\pm$ 1.84e-03 & 5.57e-02 $\pm$ 3.09e-03 \\
GAE   & 4.99e-01 $\pm$ 1.12e-01 & 3.73e-02 $\pm$ 2.35e-02 & 1.79e-02 $\pm$ 1.84e-02 & 1.68e-03 $\pm$ 3.57e-03 & 4.36e-02 $\pm$ 8.35e-03 \\
Vanilla AE      & 4.89e-01 $\pm$ 6.01e-02 & 1.21e-01 $\pm$ 7.22e-02 & 5.23e-02 $\pm$ 3.44e-02 & 4.71e-03 $\pm$ 5.16e-03 & 4.79e-02 $\pm$ 4.05e-03 \\
\bottomrule
\end{tabular}
}
\label{tab:dsprites}
\end{table}

\begin{table}[!htbp]
    \centering
    \caption{Evaluation metrics (mean $\pm$ standard deviation over 5 runs) on the (Uniform) MNIST dataset. For MSE and KL metrics, lower values are better; for $k$-NN, higher values are better.}
    \resizebox{\columnwidth}{!}{
\rowcolors{2}{white}{gray!10}
\begin{tabular}{lccccc}
\toprule
Model & $k$-NN($\uparrow$) & KL$_{0.01}$($\downarrow$) & KL$_{0.1}$($\downarrow$) & KL$_{1}$($\downarrow$) & MSE($\downarrow$) \\\midrule
BLAE     & \textbf{9.03e-01 $\pm$ 9.38e-03} & \textbf{4.79e-02 $\pm$ 5.32e-03} & \textbf{4.01e-02 $\pm$ 1.39e-02} & \textbf{1.22e-02 $\pm$ 6.28e-03} & \textbf{2.92e-03 $\pm$ 1.53e-03} \\
SPAE     & 8.59e-01 $\pm$ 3.10e-02 & 7.50e-02 $\pm$ 1.71e-02 & 6.35e-02 $\pm$ 1.42e-02 & 1.58e-02 $\pm$ 7.63e-03 & 6.91e-03 $\pm$ 2.29e-03 \\
TAE    & 8.29e-01 $\pm$ 2.68e-02 & 5.17e-02 $\pm$ 9.51e-03 & 6.69e-02 $\pm$ 1.99e-02 & 1.64e-02 $\pm$ 8.08e-03 & 4.93e-03 $\pm$ 6.78e-04 \\
DN     & 8.64e-01 $\pm$ 3.17e-02 & 1.33e-01 $\pm$ 4.48e-02 & 9.15e-02 $\pm$ 3.60e-02 & 1.49e-02 $\pm$ 6.72e-03 & 3.43e-03 $\pm$ 9.97e-04 \\
GRAE     & 8.70e-01 $\pm$ 5.67e-02 & 1.11e-01 $\pm$ 5.63e-02 & 6.87e-02 $\pm$ 2.43e-02 & 1.41e-02 $\pm$ 7.06e-03 & 3.51e-03 $\pm$ 1.32e-03 \\
CAE      & 7.69e-01 $\pm$ 3.23e-02 & 2.58e-01 $\pm$ 3.32e-02 & 1.62e-01 $\pm$ 4.66e-02 & 1.93e-02 $\pm$ 8.74e-03 & 5.54e-03 $\pm$ 8.10e-04 \\
GGAE     & 8.11e-01 $\pm$ 4.09e-02 & 4.95e-01 $\pm$ 1.54e-01 & 1.95e-01 $\pm$ 4.04e-02 & 1.96e-02 $\pm$ 8.65e-03 & 4.33e-03 $\pm$ 1.69e-03 \\
IRAE     & 7.82e-01 $\pm$ 4.04e-02 & 1.04e-01 $\pm$ 3.58e-02 & 9.41e-02 $\pm$ 2.13e-02 & 1.65e-02 $\pm$ 8.70e-03 & 6.17e-03 $\pm$ 4.27e-04 \\
GAE   & 7.03e-01 $\pm$ 3.81e-02 & 1.31e-01 $\pm$ 4.19e-02 & 1.31e-01 $\pm$ 4.31e-02 & 1.96e-02 $\pm$ 9.24e-03 & 6.59e-03 $\pm$ 1.27e-03 \\
Vanilla AE      & 7.69e-01 $\pm$ 6.32e-02 & 4.27e-01 $\pm$ 1.65e-01 & 1.76e-01 $\pm$ 4.96e-02 & 1.85e-02 $\pm$ 8.32e-03 & 6.58e-03 $\pm$ 4.79e-03 \\
\bottomrule
\end{tabular}
    }
    \label{tab:mnist_uniform}
\end{table}

\begin{table}[!htbp]
    \centering
    \caption{Evaluation metrics (mean $\pm$ standard deviation over 5 runs) on the (non-uniform) MNIST dataset. For MSE and KL metrics, lower values are better; for $k$-NN, higher values are better.}
    \resizebox{\columnwidth}{!}{
\rowcolors{2}{white}{gray!10}
\begin{tabular}{lccccc}
\toprule
Model & $k$-NN($\uparrow$) & KL$_{0.01}$($\downarrow$) & KL$_{0.1}$($\downarrow$) & KL$_{1}$($\downarrow$) & MSE($\downarrow$) \\\midrule
BLAE     & 8.65e-01 $\pm$ 4.50e-02 & \textbf{2.89e-02 $\pm$ 1.80e-02} & \textbf{1.66e-02 $\pm$ 2.60e-03} & \textbf{1.19e-03 $\pm$ 7.91e-05} & 3.88e-03 $\pm$ 7.87e-04 \\
SPAE     & 8.22e-01 $\pm$ 1.05e-01 & 5.59e-02 $\pm$ 3.13e-02 & 6.11e-02 $\pm$ 4.39e-02 & 5.44e-03 $\pm$ 5.09e-03 & 7.78e-03 $\pm$ 2.90e-03 \\
TAE    & 8.77e-01 $\pm$ 1.22e-02 & 3.35e-02 $\pm$ 8.59e-03 & 2.23e-02 $\pm$ 1.01e-03 & 1.32e-03 $\pm$ 1.66e-05 & 5.61e-03 $\pm$ 1.88e-03 \\
DN     & 8.81e-01 $\pm$ 6.48e-02 & 7.19e-02 $\pm$ 4.58e-02 & 4.10e-02 $\pm$ 1.74e-02 & 4.23e-03 $\pm$ 2.03e-03 & 5.48e-03 $\pm$ 1.65e-03 \\
GRAE     & \textbf{9.14e-01 $\pm$ 2.22e-02} & 1.06e-01 $\pm$ 1.57e-02 & 4.84e-02 $\pm$ 1.72e-02 & 5.23e-03 $\pm$ 1.30e-03 & \textbf{3.08e-03 $\pm$ 8.84e-04} \\
CAE      & 7.58e-01 $\pm$ 4.05e-02 & 1.48e-01 $\pm$ 1.95e-02 & 1.12e-01 $\pm$ 5.71e-03 & 4.55e-03 $\pm$ 2.44e-04 & 8.50e-03 $\pm$ 1.33e-03 \\
GGAE     & 7.15e-01 $\pm$ 4.59e-02 & 2.63e-01 $\pm$ 9.76e-02 & 1.52e-01 $\pm$ 1.06e-01 & 8.92e-03 $\pm$ 1.03e-02 & 9.75e-03 $\pm$ 2.48e-03 \\
IRAE     & 7.36e-01 $\pm$ 6.63e-02 & 7.25e-02 $\pm$ 1.85e-02 & 7.93e-02 $\pm$ 2.35e-02 & 5.17e-03 $\pm$ 3.55e-03 & 8.75e-03 $\pm$ 2.46e-03 \\
GAE   & 6.82e-01 $\pm$ 1.94e-01 & 1.48e-01 $\pm$ 6.16e-02 & 9.22e-02 $\pm$ 2.45e-02 & 4.13e-03 $\pm$ 8.02e-04 & 1.45e-02 $\pm$ 1.90e-02 \\
Vanilla AE      & 7.49e-01 $\pm$ 4.75e-02 & 2.76e-01 $\pm$ 1.07e-01 & 1.48e-01 $\pm$ 3.72e-02 & 7.74e-03 $\pm$ 3.39e-03 & 9.70e-03 $\pm$ 1.75e-03 \\
\bottomrule
\end{tabular}
    }
    \label{tab:mnist_nonuniform}
\end{table}


The Swiss Roll results show that BLAE attains the lowest reconstruction error and KL divergence across all bandwidths, together with the highest $k$-NN recall, indicating both accurate geometry recovery and strong neighborhood preservation. On dSprites, BLAE also achieves the best scores across metrics, with particularly large margins on $KL_\sigma$, demonstrating generalization to out-of-distribution regions. For uniform MNIST, BLAE remains competitive on all measures, leading in MSE, $k$-NN, and KL at three bandwidths. On the more challenging non-uniform MNIST, although some baselines approach similar $k$-NN performance, BLAE secures the best results on three of the five metrics, and the embeddings in Figure~\ref{fig:mnist} show markedly greater robustness to distributional shift.

\subsection{Computational complexity}
\label{appendix:complexity}

BLAE, along with other graph-based autoencoders, begins by constructing a neighborhood graph and computing pairwise shortest paths to approximate geodesic distances. This preprocessing step has a time complexity of $\mathcal{O}(n^2\log n)$, but it is performed only once and thus does not impact training efficiency. During training, each mini-batch only requires slicing the precomputed geodesic distance matrix to extract the relevant submatrix. For instance, in the Swiss Roll experiment, computing the full geodesic distance matrix took only 0.3 seconds. 

Although BLAE integrates both graph-based (injective) and gradient-based (bi-Lipschitz) regularization mechanisms, its computational complexity does not significantly exceed that of employing either regularization approach in isolation. This efficiency stems from the fact that only a small subset of samples activate the regularization terms during practical training. Specifically, for data pair $(x,x')$ satisfying $d_\N(x,x')/d_\M(x,x')\in(\epsilon,1)$ or data point $x$ where $1/\kappa<\sigma_{\min}(J_{\Dec_\phi}(x))\le \sigma_{\max}(J_{\Dec_\phi}(x))<\kappa$, the corresponding gradients of regularization vanish identically.

\begin{table}[!htbp]
    \centering
    \caption{Runtime for training BLAE and other baseline models. All experiments were conducted on a Mac Mini equipped with an Apple M4 chip (16GB RAM).}
    \small
    \resizebox{\columnwidth}{!}{
    \begin{tabular}{c|ccccccccccc}
    \toprule
    Model & BLAE & SPAE & TAE & DN & GRAE & CAE & GGAE & IRAE & GAE & Vanilla AE \\
    \midrule
    Runtime (s) & 170.9 & 162.1 & 601.3 & 204.4 & 147.3 & 199.6 & 415.8 & 189.3 & 207.6 & 136.2 \\
    \bottomrule
    \end{tabular}
    }
    \label{tab:runtime}
\end{table}

Consequently, these inactive components are naturally excluded from backpropagation computations. To benchmark, we measured the runtime for training the Swiss Roll dataset in 3000 training steps with 1500 samples divided into three batches (batch size=500). Notably, considering the robustness of gradient-based regularization under limited sample sizes, we implemented a partial sampling strategy where only 10\% of data points within each training batch are utilized for computing the Jacobian matrix and the corresponding regularization. As shown in Table~\ref{tab:runtime}, BLAE exhibits comparable time complexity to other baselines.

\section{Extented experiments}
\label{exp_extension}

\subsection{Sensitivity analysis of \texorpdfstring{$b$}{b} and length}
\label{swissroll_extension}

In this section, we conduct a sensitivity analysis on the Swiss Roll dataset with respect to two key parameters: the spiral factor $b$ and the manifold length, both of which influence the geometric complexity of the data manifold.

\begin{figure}[!htbp]
    \centering
    \includegraphics[width=0.9\linewidth]{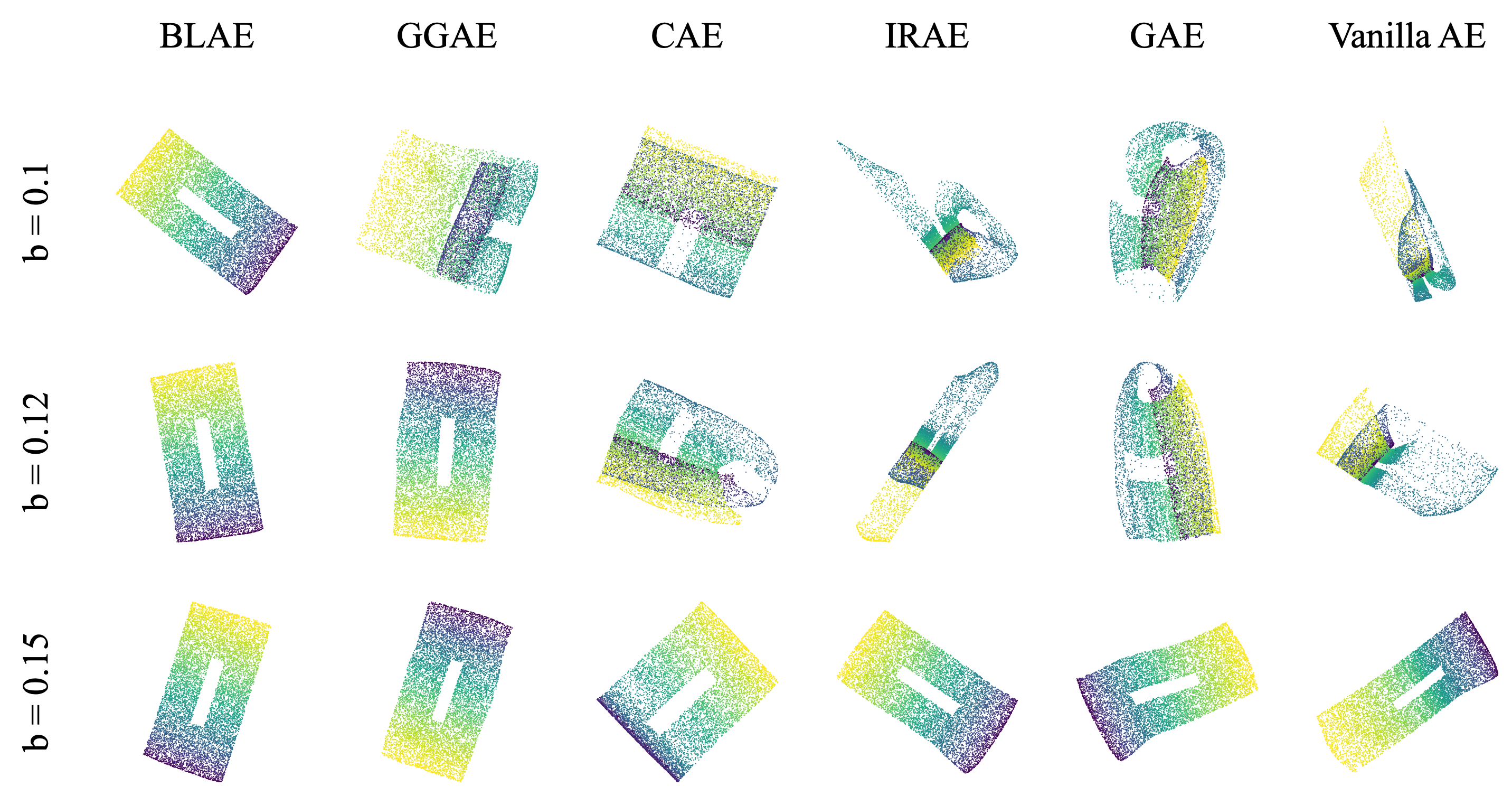}
    \caption{2-D latent representations of Swiss Roll data learned by BLAE and gradient-based baselines trained with different values of $b$ (0.1, 0.12, 0.15) on the Swiss Roll data. All models were trained using 1500 sample points, and all figures were plotted using the entire 10,000 sample points.}
    \label{fig:swissroll-b}
\end{figure}

In logarithmic spirals, the curvature is inversely related to $|b|$: smaller values of $|b|$ correspond to tighter coiling of the spiral. During experiments, we observed that gradient-based autoencoders, such as TAE, SPAE, and GGAE, are more prone to converge to suboptimal local minima as $|b|$ decreases. To assess this behavior systematically, we varied b while keeping all other parameters fixed. As shown in Figure~\ref{fig:swissroll-b}, when $b = 0.1$, the baseline models struggle to fully unfold the manifold structure. Interestingly, at $b = 0.12$, GGAE shows earlier signs of escaping non-injective regimes, likely due to its hybrid design that combines gradient-based regularization with structural information from a graph. Full unfolding of the spiral is achieved by all models once $b$ increases to 0.15.

\begin{figure}[!htbp]
    \centering
    \includegraphics[width=0.9\linewidth]{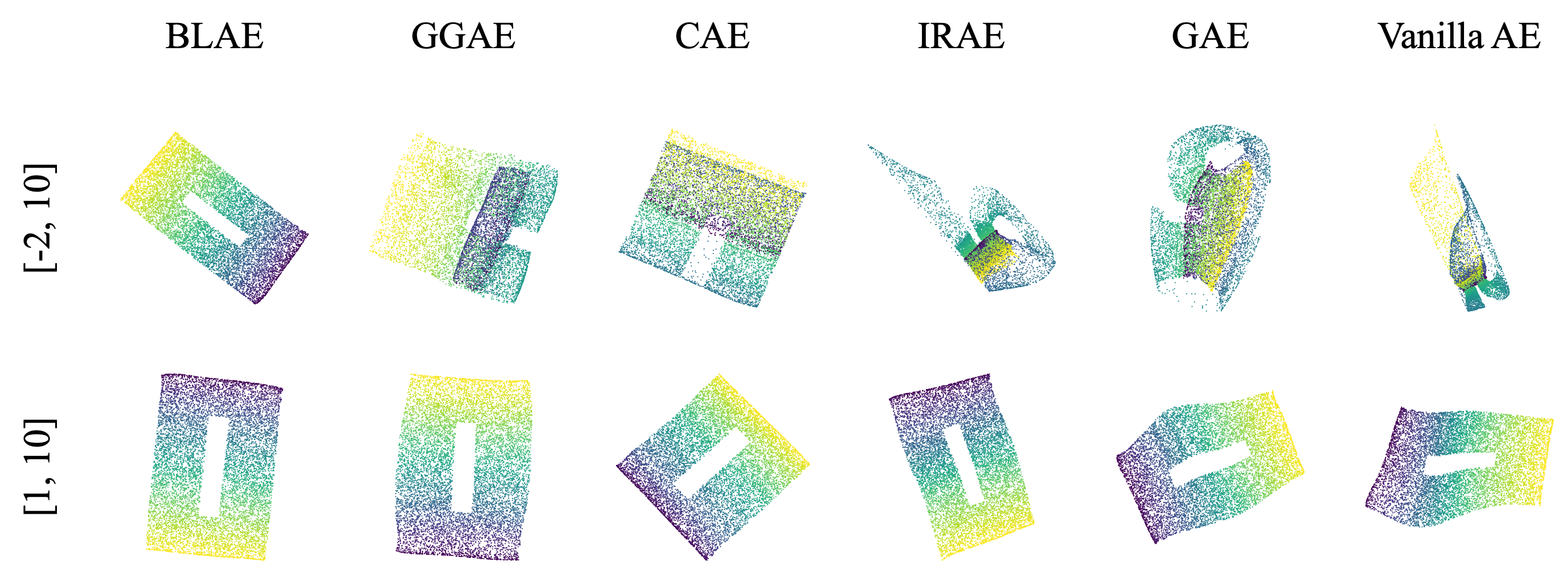}
    \caption{2-D latent representations learned of Swiss Roll data by BLAE and graph-based baselines trained with different lengths ([-2, 10], [1, 10]) on the Swiss Roll data. All models were trained using 1500 sample points, and all figures were plotted using the entire 10,000 sample points.}
    \label{fig:swissroll-l}
\end{figure}

The manifold length is another important factor influencing unfolding quality. Empirical evidence suggests that gradient-based models tend to perform better on shorter Swiss Roll configurations. To evaluate this, we fixed $b = 0.1$ and generated two versions of the Swiss Roll: (1) a longer manifold defined over $[-2, 10] \times [0, 6] \setminus [1.5, 6.5] \times [2.5, 3.5]$, and (2) a shorter manifold over $[1, 10] \times [0, 6] \setminus [3.5, 7.5] \times [2.5, 3.5]$. As shown in Figure~\ref{fig:swissroll-l}, the shorter configuration leads to more effective unfolding for most models, particularly those graph-based baseline methods.

\subsection{Sensitivity analysis of sample size}
\label{sensitivity}

\begin{figure}[!htbp]
    \centering
    \includegraphics[width=0.9\linewidth]{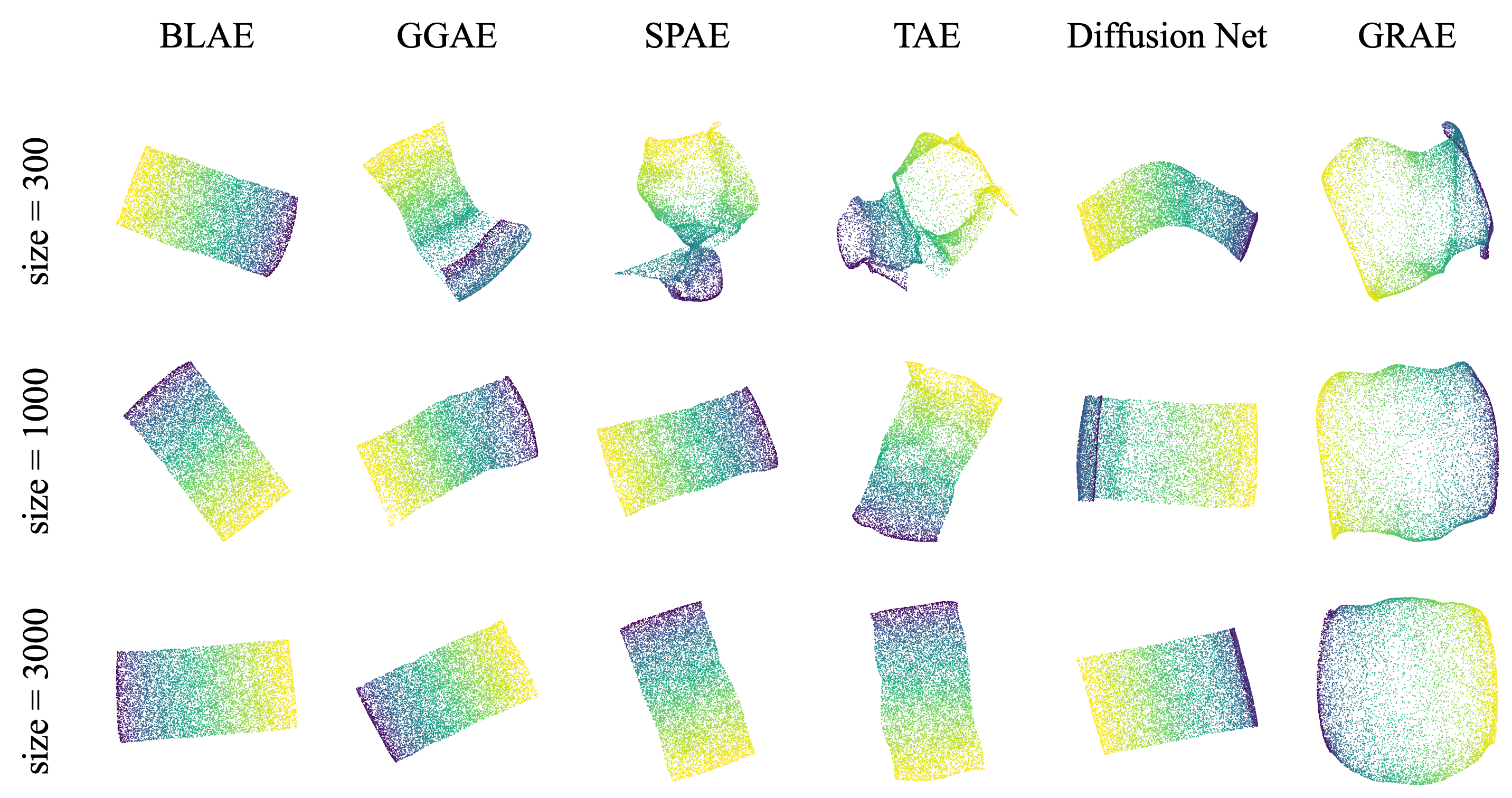}
    \caption{2-D latent representations learned by BLAE and graph-based baselines trained with different sample sizes (400, 1000, 3000) on the Swiss Roll data. All models were trained on the indicated sample sizes, while visualizations use the full set of 10,000 data points.}
    \label{fig:swissroll-s}
\end{figure}

\begin{table}[!htbp]
    \centering
    \caption{Evaluation metrics (over 5 runs) of BLAE and graph-based baselines trained with different sample sizes (400, 1000, 3000) on the Swiss Roll data. For MSE and KL metrics, lower values are better; for $k$-NN, higher values are better. The best performance for each metric is shown in bold.}
    \resizebox{\columnwidth}{!}{
    \rowcolors{3}{gray!10}{white}
    \begin{tabular}{l|rrrrrr}
    \toprule
    Measure & BLAE & GGAE & SPAE & TAE & Diffusion Net & GRAE  \\
    \midrule
    \multicolumn{7}{c}{\textbf{Sample size = 400}} \\
    \midrule
    MSE($\downarrow$)          & \textbf{1.52e-03$\pm$1.07e-04} & 9.69e-02$\pm$7.98e-03 & 1.86e-02$\pm$6.07e-03 & 5.39e-02$\pm$2.96e-03 & 1.34e-01$\pm$2.84e-02 & 1.80e-01$\pm$3.93e-03 \\
    $k$-NN($\uparrow$)        & \textbf{9.19e-01$\pm$3.10e-03} & 4.55e-01$\pm$2.89e-02 & 7.30e-01$\pm$2.38e-02 & 6.81e-01$\pm$4.18e-03 & 5.73e-01$\pm$3.05e-02 & 4.76e-01$\pm$6.31e-03 \\
    KL$_{0.01}$($\downarrow$) & \textbf{2.49e-03$\pm$2.25e-04} & 1.14e-01$\pm$1.08e-02 & 2.39e-02$\pm$4.56e-03 & 3.01e-02$\pm$1.52e-03 & 4.41e-02$\pm$1.23e-02 & 1.01e-01$\pm$3.86e-03 \\
    KL$_{0.1}$($\downarrow$)   & \textbf{3.55e-04$\pm$3.89e-05} & 1.81e-02$\pm$1.84e-03 & 5.85e-03$\pm$9.64e-04 & 7.37e-03$\pm$3.53e-04 & 3.83e-02$\pm$4.01e-03 & 2.29e-02$\pm$1.56e-04 \\
    KL$_{1}$($\downarrow$)    & \textbf{1.78e-05$\pm$2.22e-06} & 1.52e-03$\pm$6.69e-05 & 2.57e-04$\pm$1.94e-04 & 2.56e-04$\pm$6.34e-06 & 2.21e-03$\pm$7.78e-05 & 2.30e-03$\pm$7.54e-05 \\
    Runtime (s) & 8.28e+01$\pm$5.82e-01 & 1.32e+02$\pm$9.88e-01 & 7.79e+01$\pm$2.10e+00 & 1.83e+02$\pm$1.96e+00 & 9.24e+01$\pm$1.57e+00 & 7.55e+01$\pm$1.18e+00 \\
    \midrule
    \rowcolor{white}\multicolumn{7}{c}{\textbf{Sample size = 1000}} \\
    \midrule
    MSE($\downarrow$)         & \textbf{8.55e-05$\pm$7.64e-06} & 3.96e-02$\pm$1.18e-02 & 3.01e-04$\pm$6.52e-05 & 1.72e-03$\pm$4.61e-04 & 2.42e-01$\pm$3.95e-02 & 5.64e-03$\pm$1.50e-03 \\
    $k$-NN($\uparrow$)        &\textbf{ 9.81e-01$\pm$2.06e-03} & 4.87e-01$\pm$4.48e-02 & 9.35e-01$\pm$4.40e-03 & 7.16e-01$\pm$2.59e-03 & 5.03e-01$\pm$4.41e-02 & 6.25e-01$\pm$3.24e-02 \\
    KL$_{0.01}$($\downarrow$) & \textbf{1.07e-04$\pm$4.28e-05} & 1.59e-01$\pm$1.72e-02 & 1.29e-03$\pm$2.95e-04 & 2.22e-03$\pm$7.23e-04 & 3.75e-02$\pm$1.41e-02 & 6.31e-02$\pm$2.97e-02  \\
    KL$_{0.1}$($\downarrow$)  & \textbf{1.14e-05$\pm$3.96e-06} & 2.28e-02$\pm$3.33e-03 & 2.02e-04$\pm$8.80e-05 & 2.22e-03$\pm$7.23e-04 & 3.05e-02$\pm$4.70e-03 & 1.24e-02$\pm$8.16e-03 \\
    KL$_{1}$($\downarrow$)    & \textbf{8.63e-07$\pm$3.32e-07} & 1.43e-03$\pm$9.01e-05 & 8.82e-06$\pm$4.38e-06 & 2.22e-03$\pm$7.23e-04 & 2.04e-03$\pm$1.75e-04 & 1.00e-03$\pm$6.53e-04 \\
    Runtime (s) & 1.13e+02$\pm$4.14e+00 & 2.17e+02$\pm$3.45e+00 & 1.02e+02$\pm$3.70e+00 & 3.05e+02$\pm$3.68e+00 & 1.18e+02$\pm$4.39e+00 & 9.85e+01$\pm$4.99e-01 \\
    \midrule
    \multicolumn{7}{c}{\textbf{Sample size = 3000}} \\
    \midrule
    MSE($\downarrow$)         & \textbf{6.42e-05$\pm$6.57e-06} & 1.50e-02$\pm$1.17e-02 & 1.29e-04$\pm$2.66e-05 & 5.72e-04$\pm$1.28e-04 & 2.96e-01$\pm$3.01e-03 & 3.16e-03$\pm$3.15e-04 \\
    $k$-NN($\uparrow$)        & \textbf{9.81e-01$\pm$3.32e-03} & 6.50e-01$\pm$7.56e-02 & 9.53e-01$\pm$4.06e-03 & 9.31e-01$\pm$6.14e-03 & 4.37e-01$\pm$4.87e-03 & 6.79e-01$\pm$6.40e-03 \\
    KL$_{0.01}$($\downarrow$) & \textbf{4.49e-04$\pm$2.25e-05} & 8.68e-02$\pm$7.70e-02 & 7.03e-04$\pm$1.94e-04 & 2.49e-03$\pm$2.25e-04 & 2.07e-02$\pm$1.86e-03 & 9.30e-02$\pm$5.89e-03  \\
    KL$_{0.1}$($\downarrow$)  & \textbf{5.75e-05$\pm$1.01e-05} & 2.70e-02$\pm$8.78e-03 & 8.28e-05$\pm$2.45e-05 & 3.55e-04$\pm$3.89e-05 & 2.77e-02$\pm$1.76e-03 & 8.81e-03$\pm$9.79e-04 \\
    KL$_{1}$($\downarrow$)    & \textbf{1.98e-06$\pm$1.88e-07} & 1.31e-03$\pm$1.12e-03 & 3.34e-06$\pm$3.89e-07 & 1.78e-05$\pm$2.22e-06 & 2.08e-03$\pm$8.74e-05 & 7.29e-04$\pm$6.71e-05 \\
    Runtime (s)  & 1.88e+02$\pm$1.47e+00 & 1.97e+03$\pm$1.72e+01 & 1.82e+02$\pm$1.42e+00 & 2.16e+03$\pm$1.00e+02 & 1.94e+02$\pm$1.75e+00 & 3.37e+02$\pm$8.01e+00 \\
    \bottomrule
    \end{tabular}
    }
\end{table}

The performance of graph-based methods is highly sensitive to sample density, as the quality of the neighborhood graph—and hence the accuracy of geodesic distance estimation—directly depends on the number of training points. In this section, we systematically analyze how the training sample size affects the behavior of graph-based autoencoders.

We generated 10,000 Swiss Roll samples using fixed parameters ($b = 0.15$, latent domain $[-2, 10] \times [0, 6]$), and trained models using subsets of 400, 1000, and 3000 samples. To maintain consistency and isolate the effect of sample size, we avoided removing any subregions from the latent space, ensuring a smooth geodesic-Euclidean correspondence.

As shown in Figure~\ref{fig:swissroll-s}, graph-based baselines exhibit increasing distortion in the latent representations as the sample size decreases, reflecting their reliance on high-resolution graphs for structural accuracy. In contrast, BLAE consistently preserves the underlying geometry across all sample sizes, demonstrating strong robustness to data sparsity and limited supervision.

\subsection{Hyperparameter Sensitivity Analysis}

\begin{figure}[t]
    \centering
    \begin{subfigure}[b]{0.47\textwidth}
        \centering
        \includegraphics[width=\textwidth]{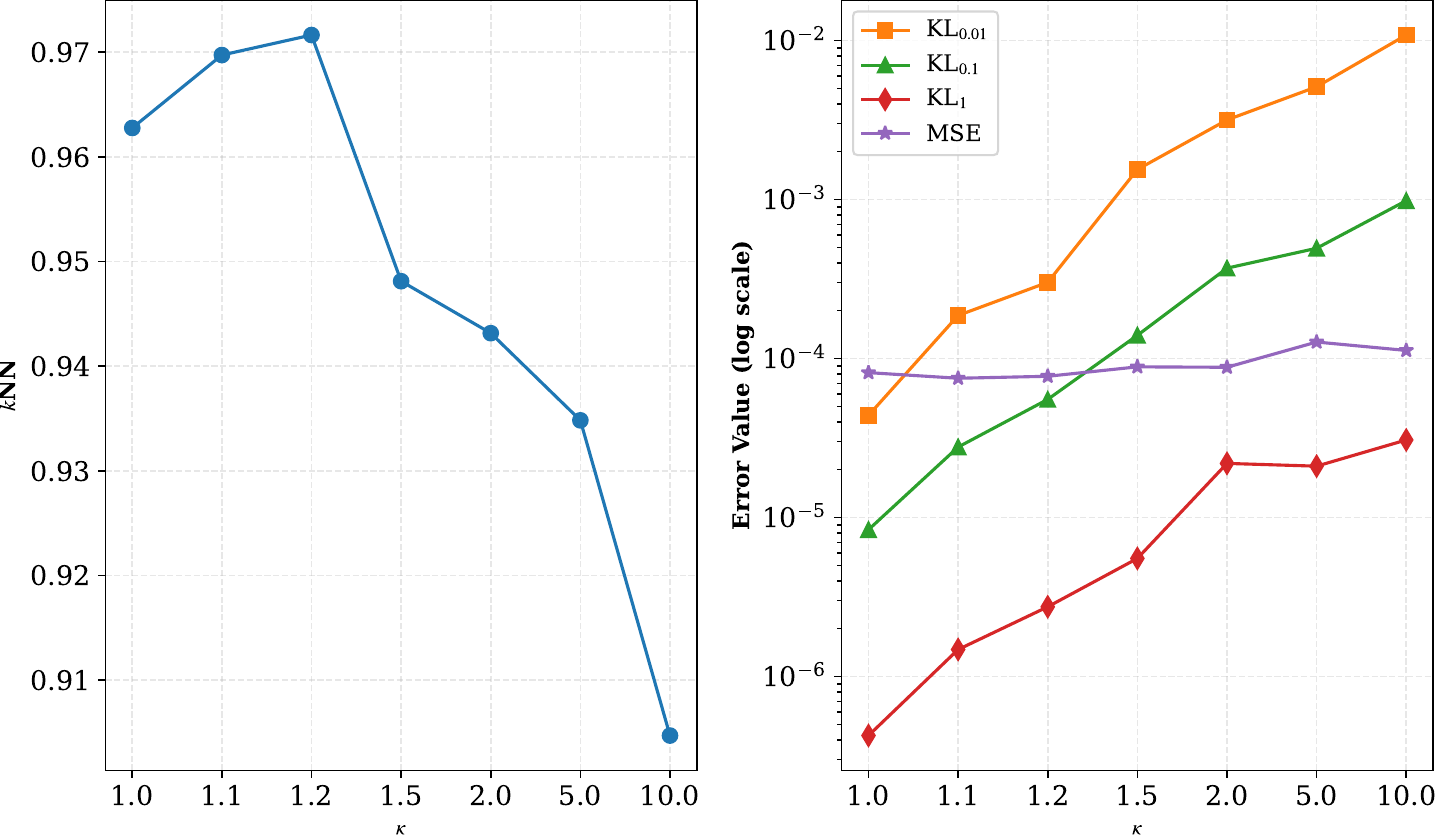}
        \caption{\new{Effect of $\kappa$ on different metrics}}
        \label{fig:kappa_metrics}
    \end{subfigure}
    \hfill
    \begin{subfigure}[b]{0.47\textwidth}
        \centering
        \includegraphics[width=\textwidth]{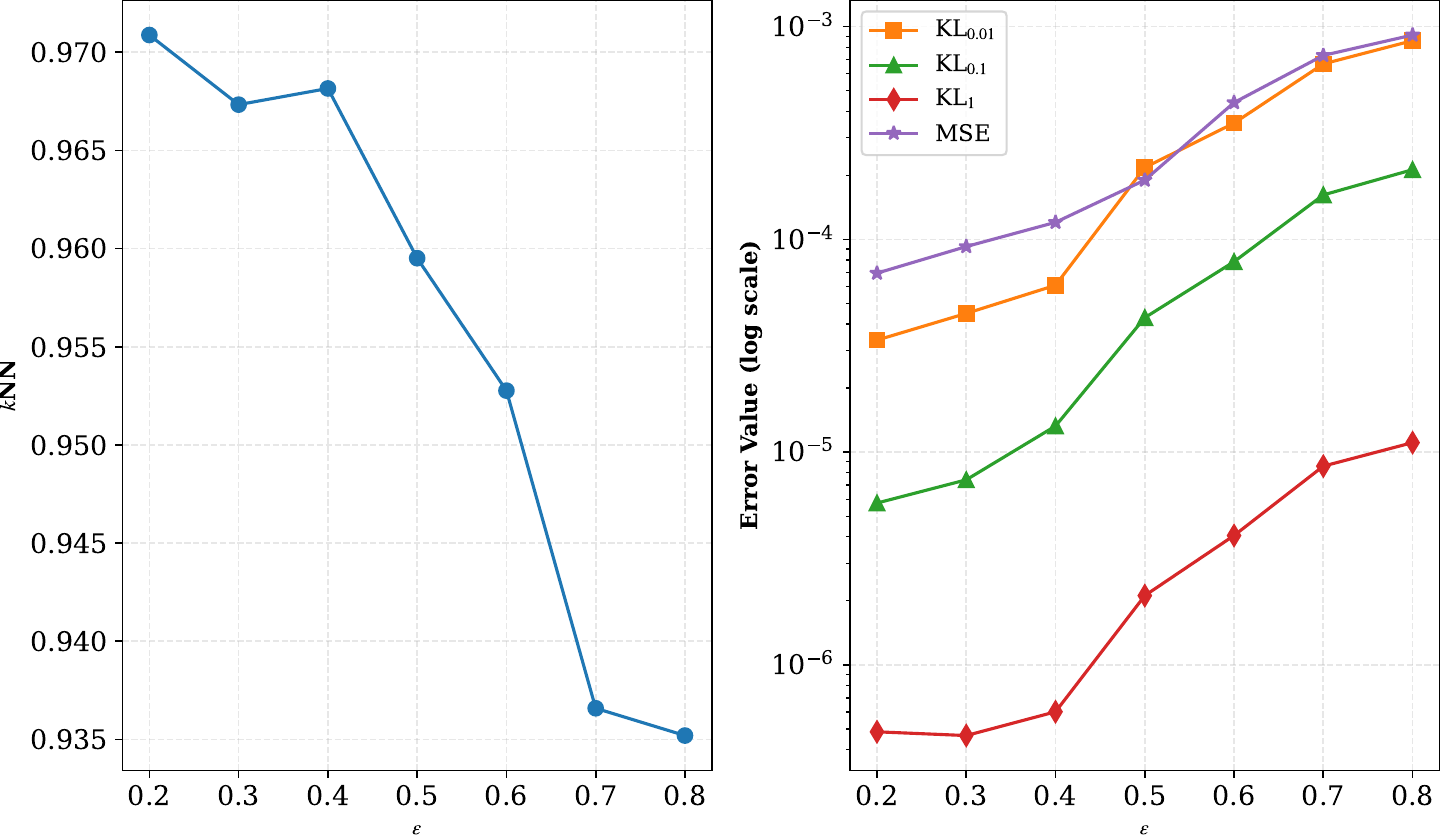}
        \caption{\new{Effect of $\epsilon$ on different metrics}}
        \label{fig:epsilon_metrics}
    \end{subfigure}
    \caption{\new{Performance evaluation across different hyperparameter settings. (a) shows the impact of varying $\epsilon$ on $k$-NN accuracy and error metrics. (b) demonstrates the effect of $\kappa$ on the same metrics. Error metrics are displayed on a logarithmic scale.}}
    \label{fig:hyperparameter_analysis}
\end{figure}

To assess robustness to hyperparameter choices, we vary the bi-Lipschitz constant $\kappa$ and separation threshold $\epsilon$ on Swiss Roll, keeping other settings fixed. Detailed discussion and ablation studies on individual regularization terms and latent space visualizations are provided in Appendix~\ref{appendix:compare} and \ref{ablation:parameter}.

\textbf{Bi-Lipschitz constant $\kappa$.}
Figure~\ref{fig:kappa_metrics} shows performance as $\kappa \in \{1.0, 1.1, 1.2, 1.5, 2.0, 5.0, 10.0\}$. k-NN recall peaks at $\kappa = 1.2$, demonstrating optimal balance between geometric preservation and flexibility. Overly strict constraints ($\kappa = 1.0$) enforce near-isomery, but slight relaxation maintains high fidelity while improving robustness. Excessive relaxation approaches unconstrained behavior and causes significant deterioration. Notably, MSE remains stable across all $\kappa$, confirming that geometric preservation drives the performance trade-off. $\kappa \in [1.0, 1.2]$ provides robust performance.

\textbf{Separation threshold $\epsilon$.}
Figure~\ref{fig:epsilon_metrics} shows results for $\epsilon \in \{0.2, 0.3, \dots, 0.8\}$. k-NN recall peaks at small $\epsilon$ values and declines gracefully as $\epsilon$ increases. This behavior reflects two competing effects: smaller $\epsilon$ provides flexible separation that tolerates geodesic approximation errors, while larger $\epsilon$ enforces increasingly strict separation constraints. As $\epsilon$ approaches 1, the constraint increasingly resembles rigid distance-preserving requirements similar to SPAE, which suffers from vulnerability to geodesic approximation errors and conflicts with bi-Lipschitz relaxation. KL divergence increases approximately one order of magnitude across this range. The optimal range $\epsilon \in [0.2, 0.4]$ balances effective topological separation with robustness to distance estimation errors.

\subsubsection{Latent Visualizations for Hyperparameter Sensitivity}

\textbf{Bi-Lipschitz constant $\kappa$.} Figure~\ref{kappa_sensitivity_ill} shows latent embeddings across different $\kappa$. For small values ($\kappa \in [1.0, 1.2]$), the Swiss Roll structure is excellently preserved with smooth rectangular boundaries and uniform point density. At moderate values ($\kappa \in [1.5, 2.0]$), the overall structure remains intact, but subtle irregularities begin to appear along the manifold boundaries. As $\kappa$ increases beyond 2.0, the geometric distortion becomes progressively more pronounced: the rectangular boundaries lose regularity and the manifold exhibits significant warping with visible curvature in regions that should be flat. Notably, even at extreme values, the topological correctness is maintained---the manifold remains properly unfolded without collapse. This visual degradation directly corresponds to the quantitative decline in $k$-NN recall shown in Figure~\ref{fig:kappa_metrics}. The visualizations confirm that while the bi-Lipschitz constraint is admissible (allowing geometric relaxation), tighter bounds ($\kappa$ closer to 1) better preserve the intrinsic manifold geometry.

\begin{figure}[htbp]
    \centering

    \begin{subfigure}[b]{0.125\linewidth}
        \centering
        \includegraphics[width=\linewidth]{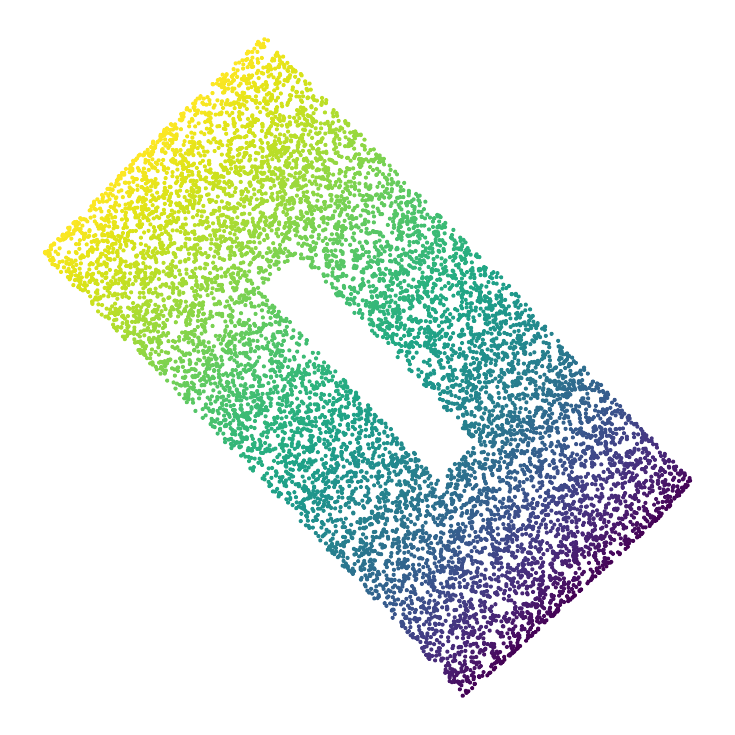}
        \caption{$\kappa = 1.0$}
    \end{subfigure}
    \begin{subfigure}[b]{0.125\linewidth}
        \centering
        \includegraphics[width=\linewidth]{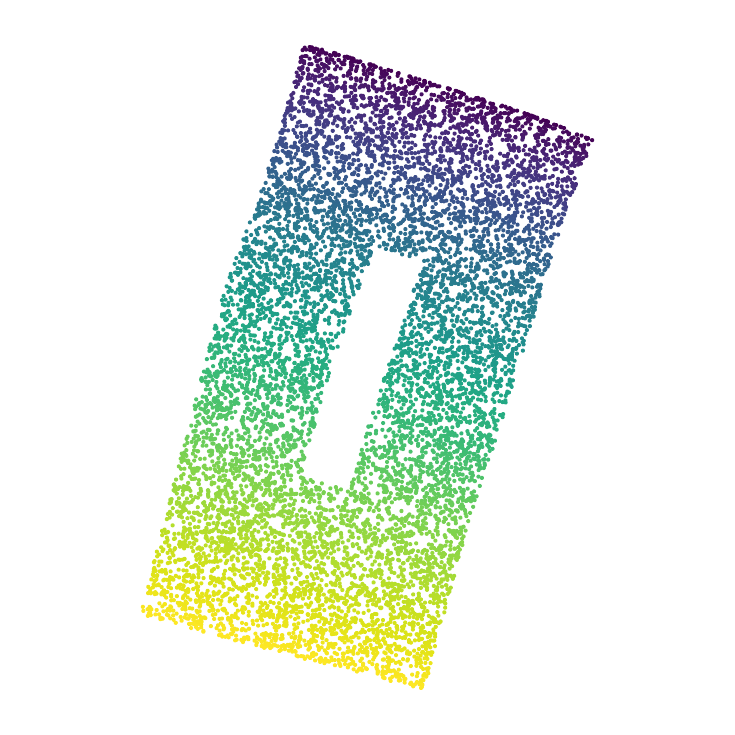}
        \caption{$\kappa = 1.1$}
    \end{subfigure}
    \begin{subfigure}[b]{0.125\linewidth}
        \centering
        \includegraphics[width=\linewidth]{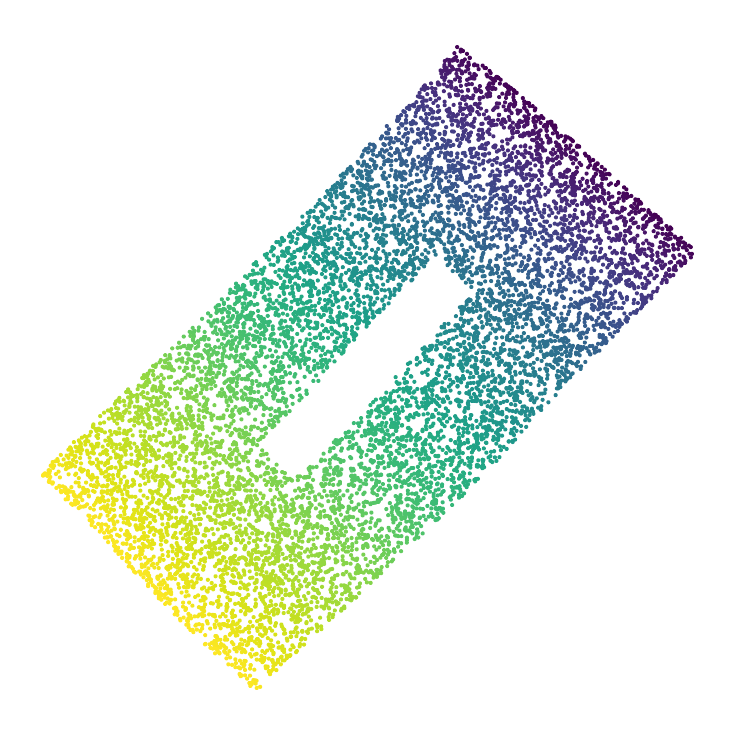}
        \caption{$\kappa = 1.2$}
    \end{subfigure}
    \begin{subfigure}[b]{0.125\linewidth}
        \centering
        \includegraphics[width=\linewidth]{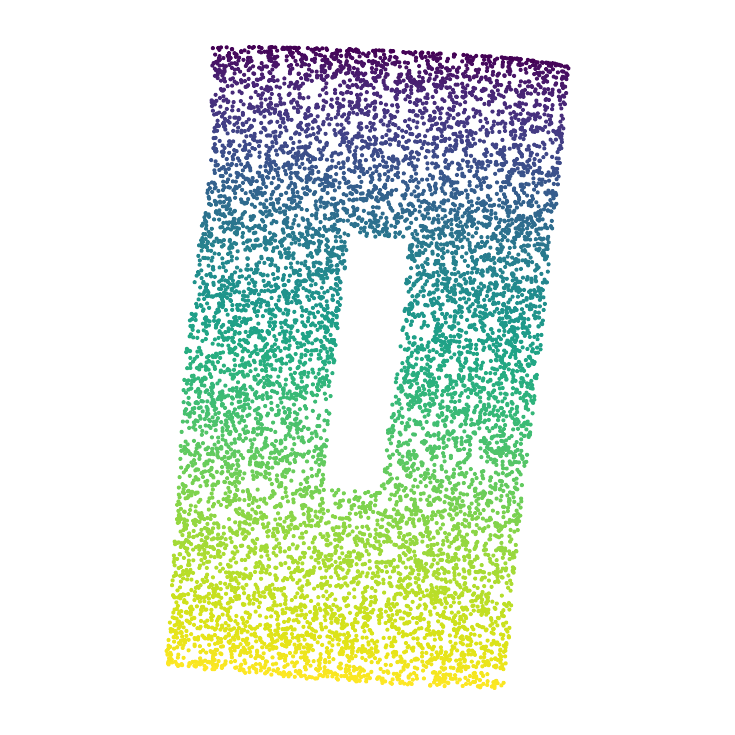}
        \caption{$\kappa = 1.5$}
    \end{subfigure}
    \begin{subfigure}[b]{0.125\linewidth}
        \centering
        \includegraphics[width=\linewidth]{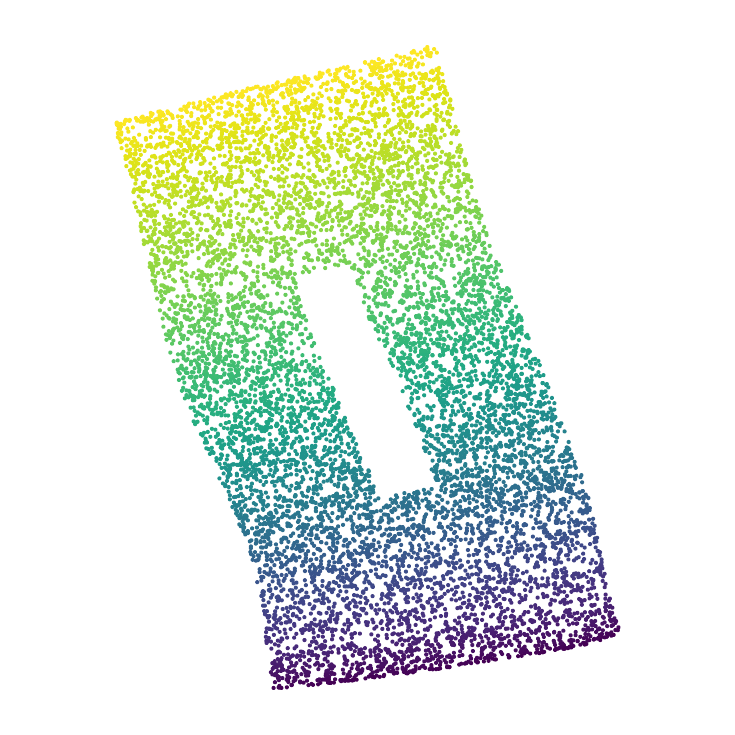}
        \caption{$\kappa = 2.0$}
    \end{subfigure}
    \begin{subfigure}[b]{0.125\linewidth}
        \centering
        \includegraphics[width=\linewidth]{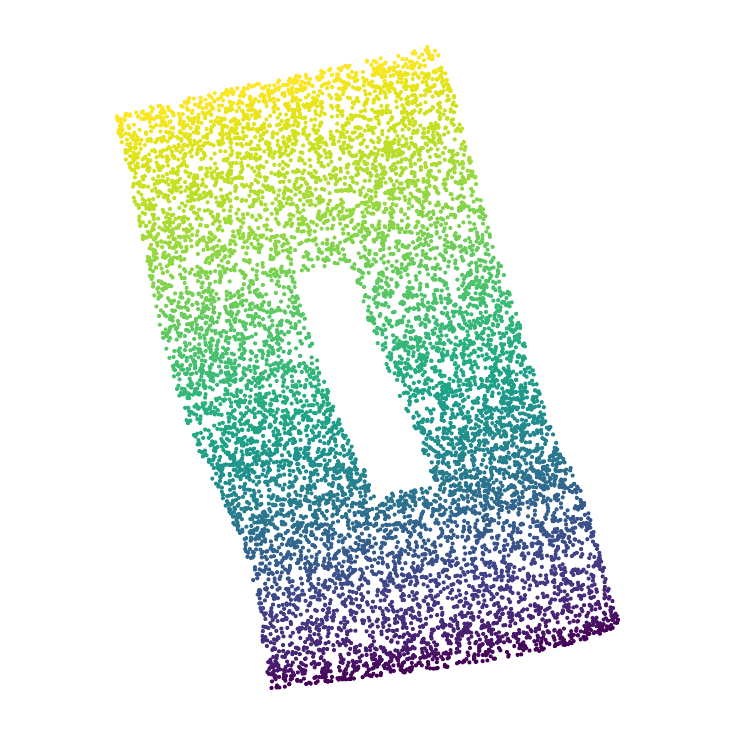}
        \caption{$\kappa = 5.0$}
    \end{subfigure}
    \begin{subfigure}[b]{0.125\linewidth}
        \centering
        \includegraphics[width=\linewidth]{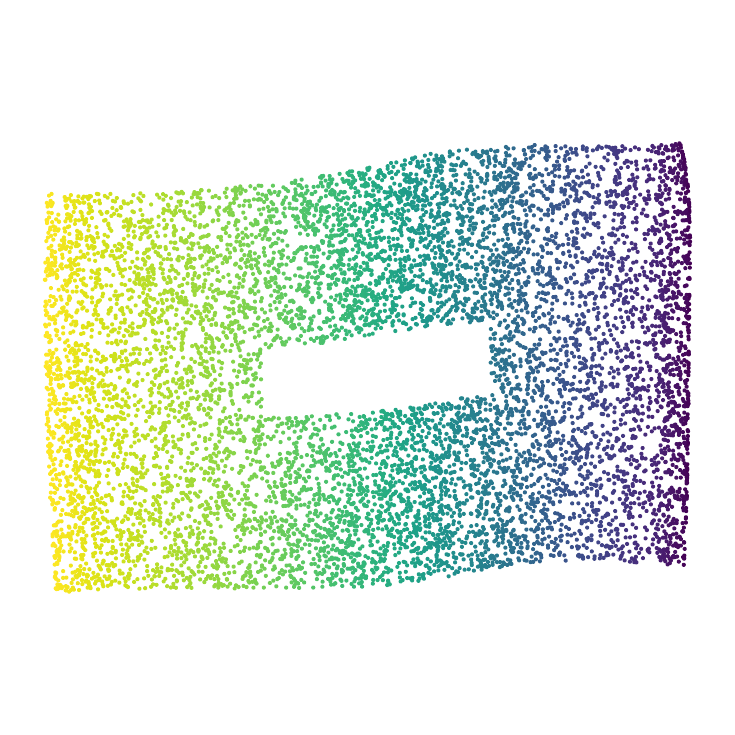}
        \caption{$\kappa = 10$}
    \end{subfigure}
    \caption{\new{Sensitivity analysis of $\kappa$: 2-D latent representation of Swiss Roll data learned by BLAE with different $\kappa$ values.}}
    \label{kappa_sensitivity_ill}
\end{figure}

\textbf{Separation threshold $\epsilon$.} Figure~\ref{epsilon_sensitivity_ill} visualizes how latent structure evolves as $\epsilon$ varies from 0.2 to 0.8. Low $\epsilon$ values (0.2--0.4) produce consistently high-quality embeddings that faithfully preserve the rectangular structure of the Swiss Roll's ground truth parameterization. The manifold boundaries are smooth and regular, and the three embeddings are visually nearly indistinguishable. This stability confirms that performance is robust within this range. Moderate $\epsilon$ values (0.5--0.6) show the emergence of subtle geometric distortions. The rectangular boundaries become slightly less regular, though overall topology remains correct. High $\epsilon$ values (0.7--0.8) exhibit significant geometric irregularities. The rectangular structure is distorted, and local smoothness is compromised. However, critically, even at $\epsilon = 0.8$, the manifold topology is still preserved---no collapse occurs, and distant manifold regions remain separated. This graceful degradation aligns with the quantitative $k$-NN decline shown in Figure~\ref{fig:epsilon_metrics}. The visualizations confirm that smaller $\epsilon$ values provide flexible separation that tolerates geodesic approximation errors, while larger $\epsilon$ values enforce increasingly rigid constraints. As $\epsilon$ grows, the requirement $\frac{d_{\mathcal{N}}(\Enc_\theta(x), \Enc_\theta(y))}{d_{\mathcal{M}}(x,y)} > \epsilon$ for points satisfying $d_{\mathcal{M}}(x,y) \geq \delta$ becomes more restrictive, approaching strict distance preservation that is vulnerable to estimation errors, leading to the geometric drift visible at larger $\epsilon$ values.

\begin{figure}[htbp]
    \centering

    \begin{subfigure}[b]{0.125\linewidth}
        \centering
        \includegraphics[width=\linewidth]{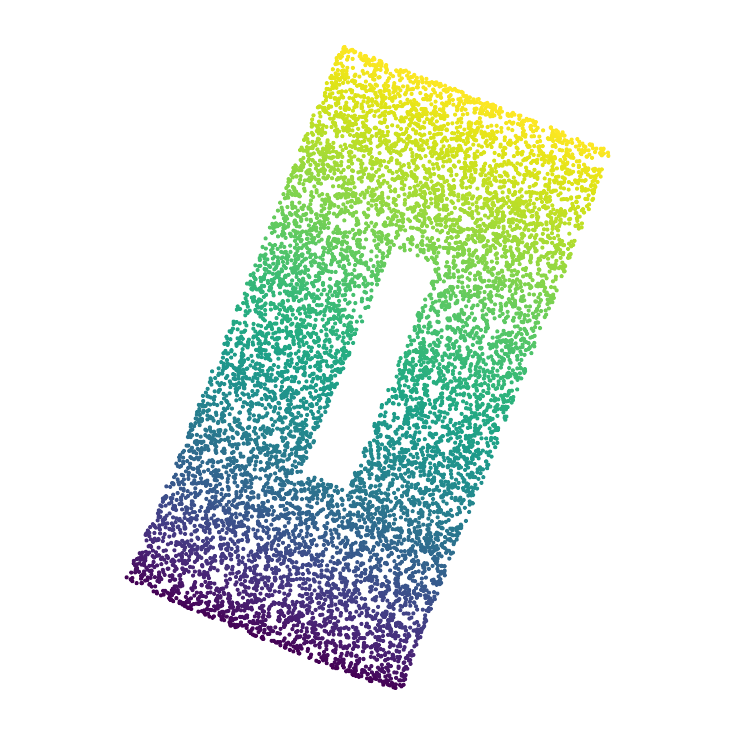}
        \caption{$\epsilon = 0.2$}
    \end{subfigure}
    \begin{subfigure}[b]{0.125\linewidth}
        \centering
        \includegraphics[width=\linewidth]{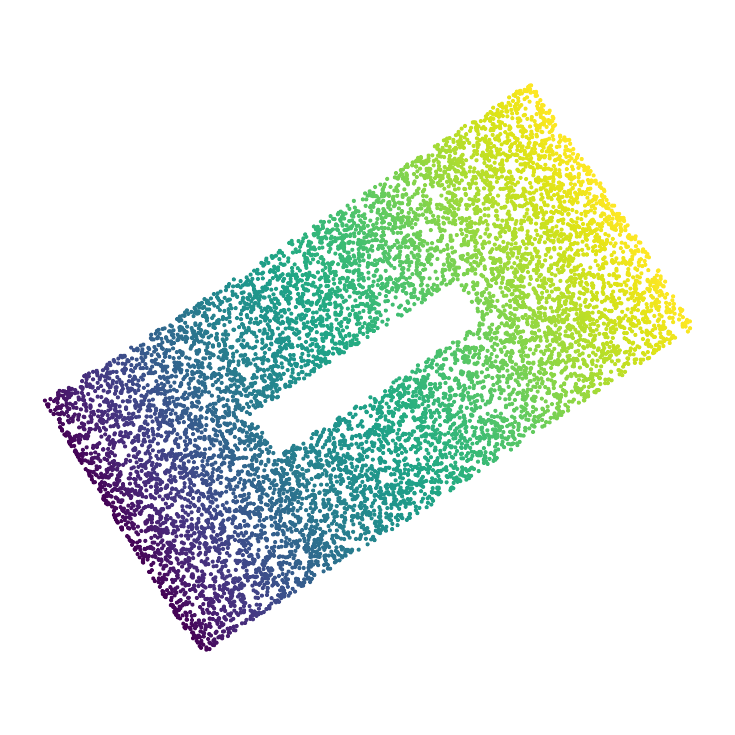}
        \caption{$\epsilon = 0.3$}
    \end{subfigure}
    \begin{subfigure}[b]{0.125\linewidth}
        \centering
        \includegraphics[width=\linewidth]{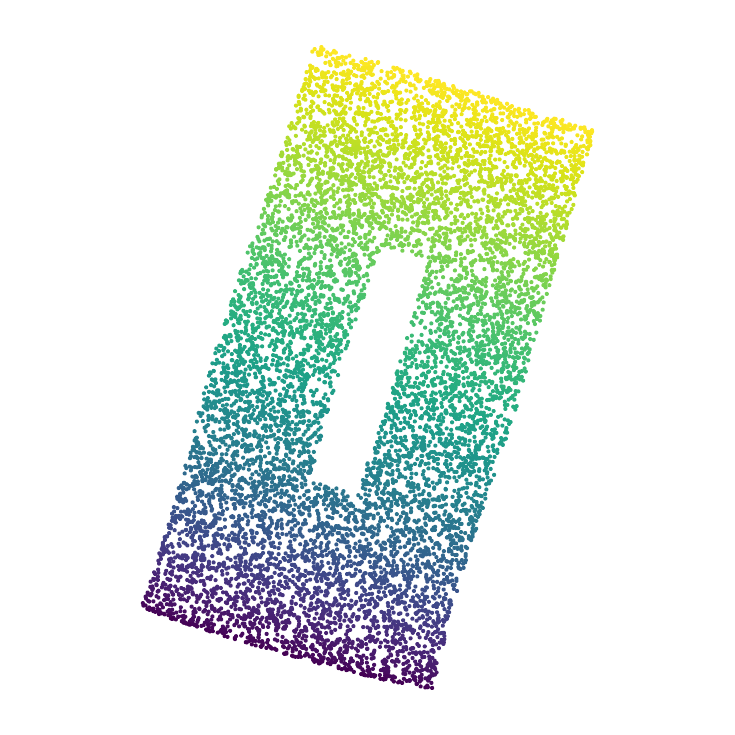}
        \caption{$\epsilon = 0.4$}
    \end{subfigure}
    \begin{subfigure}[b]{0.125\linewidth}
        \centering
        \includegraphics[width=\linewidth]{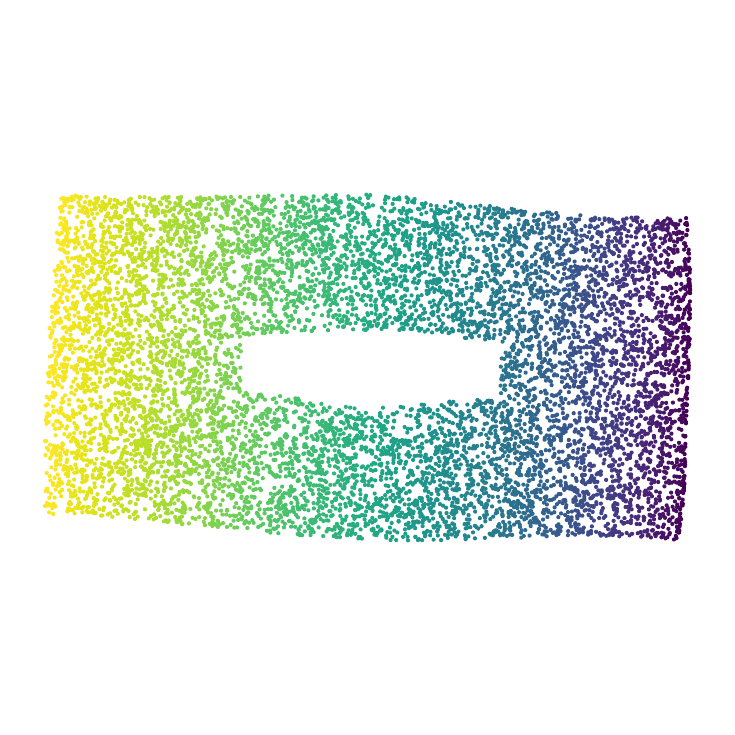}
        \caption{$\epsilon = 0.5$}
    \end{subfigure}
    \begin{subfigure}[b]{0.125\linewidth}
        \centering
        \includegraphics[width=\linewidth]{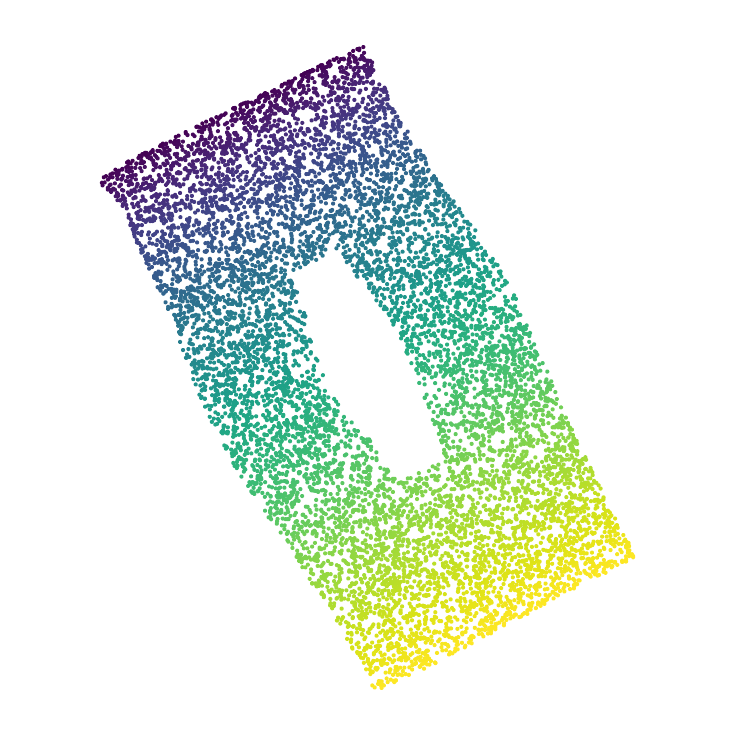}
        \caption{$\epsilon = 0.6$}
    \end{subfigure}
    \begin{subfigure}[b]{0.125\linewidth}
        \centering
        \includegraphics[width=\linewidth]{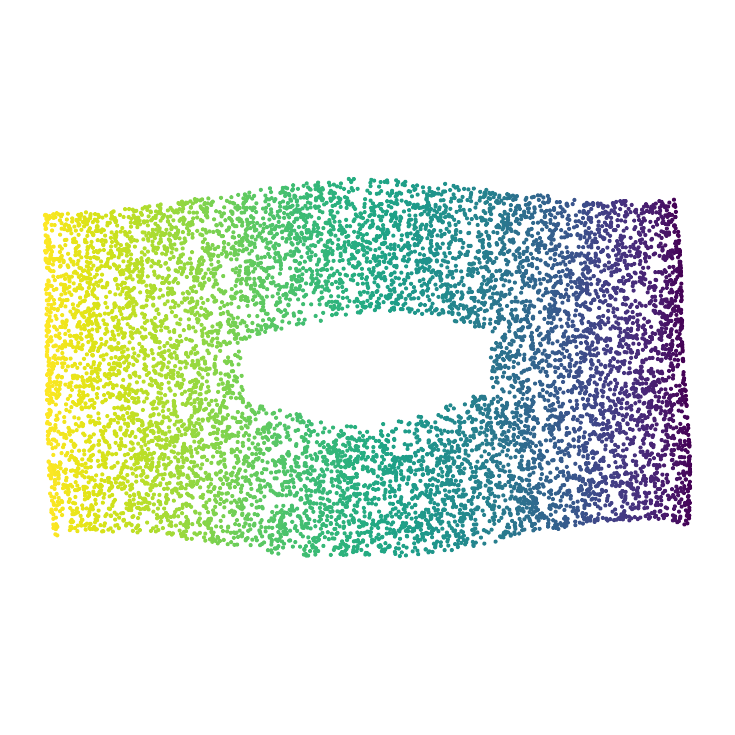}
        \caption{$\epsilon = 0.7$}
    \end{subfigure}
    \begin{subfigure}[b]{0.125\linewidth}
        \centering
        \includegraphics[width=\linewidth]{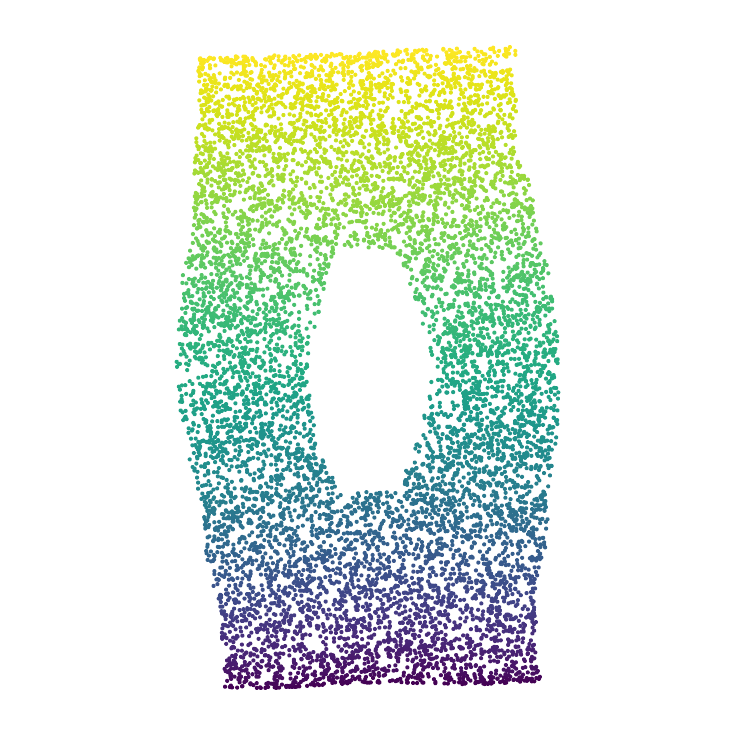}
        \caption{$\epsilon = 0.8$}
    \end{subfigure}
    \caption{\new{Sensitivity analysis of $\epsilon$: 2-D latent representation of Swiss Roll data learned by BLAE with different $\epsilon$ values.}}
    \label{epsilon_sensitivity_ill}
\end{figure}

\subsection{Comparison of injective regularization and SPAE regularizations} 
\label{appendix:compare}

Our injective regularization shares structural similarities with SPAE’s regularization in their formal use of geodesic distance ratios between the data manifold $\M$ and the latent space $\N$. Both frameworks impose constraints involving $d_\M\slash d_\N$. However, their underlying philosophies differ significantly.

SPAE enforces a strict constraint of $d_\M\slash d_\N=\text{constant}$, relying on a graph-based approximation $\hat{d}_\M$ that introduces systematic bias under finite sampling. Additionally, SPAE substitutes $d_\N$ with the Euclidean norm $\|\cdot\|_2$, which inherently underestimates geodesic distances—especially in non-convex latent spaces, where equality $\|\cdot\|_2 = d_\mathcal{N}$ holds only in convex settings. For example, in the Swiss Roll dataset, points located on opposite sides of the removed strip exhibit large geodesic distances, yet are deceptively close in Euclidean space.

In contrast, our injective regularization selectively penalizes extreme distortions in the $d_\M/d_\N$ ratio, and is inherently more tolerant to approximation errors. Rather than enforcing a global constraint, it activates only when a specific geometric violation is detected, enabling targeted regularization through appropriate gradient signals.

Notably, combining SPAE with gradient-based (bi-Lipschitz) regularization results in inferior performance. This degradation arises from the compounded effects of distance approximation errors and the rigid constraints enforced by SPAE.

\subsection{Ablation Studies}
\label{ablation:parameter}

\subsubsection{Combining Injective Regularization with Gradient-Based Methods}

To demonstrate that injective regularization can alleviate pathological local minima in existing gradient-based autoencoders, we combine our separation criterion with two representative gradient-based regularizers: CAE and GAE. We conduct experiments on the Swiss Roll dataset with 1,500 training samples and visualize using the complete 10,000-point dataset.

Figure~\ref{fig:injective} shows the results of combining injective regularization with CAE and GAE. Panels (a)-(b) show that CAE and GAE alone struggle to properly unfold the Swiss Roll manifold, exhibiting non-injective collapse similar to the vanilla autoencoder. However, when combined with our injective regularization (panels (c)-(d)), both methods successfully preserve the manifold topology and properly unfold the structure without collapse. While the unfolded representations show some geometric irregularities along the boundaries compared to the ideal rectangular structure, the critical topological property, preventing distant manifold regions from collapsing into nearby latent codes, is successfully enforced. This demonstrates that our injective regularization acts as a complementary constraint that helps gradient-based methods escape pathological local minima caused by non-injectivity. For optimal geometric preservation, including smooth boundaries, combining both injective and bi-Lipschitz regularization (as in full BLAE) is necessary, which we demonstrate throughout the next section.

\begin{figure}[!htbp]
    \centering
    \includegraphics[width=0.8\linewidth]{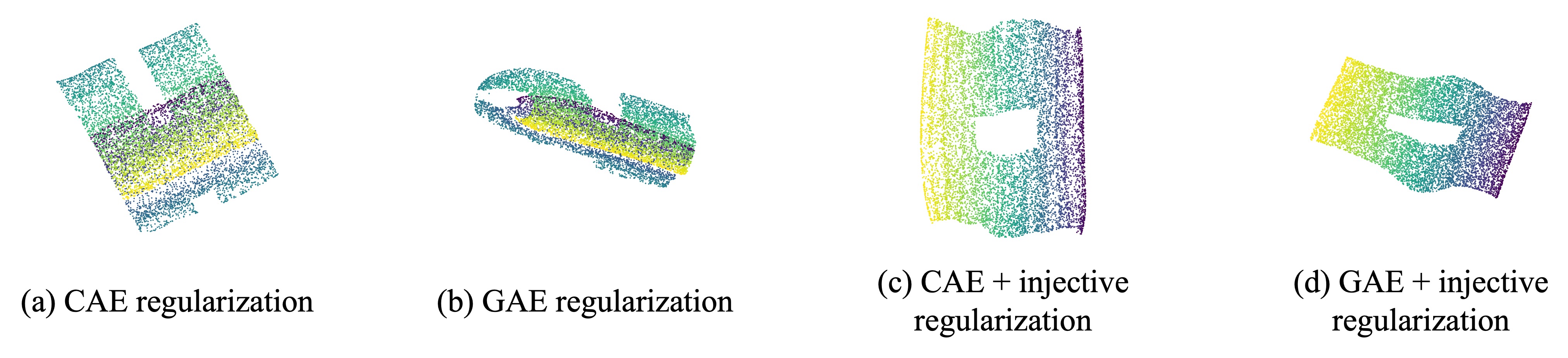}
    \caption{\new{2-D latent representation of Swiss Roll data learned by autoencoders with (a) only CAE regularization, (b) only GAE regularization, (c) CAE + injective regularizations, (d) CAE + injective regularizations.}}
    \label{fig:injective}
\end{figure}

\subsubsection{Complementary Roles of Injective and Bi-Lipschitz Regularization}
To validate that both regularization terms serve distinct and complementary purposes, we conduct ablation experiments on the Swiss Roll dataset. Figure~\ref{fig:comparison} presents 2-D latent representations learned under four different regularization schemes, all trained with 1500 samples and visualized using the complete 10,000-point dataset.

\textbf{Injective regularization only.} (Fig.~\ref{fig:comparison} (a)) With $\lambda_\text{reg} > 0$ and $\lambda_\text{bi-Lip} = 0$, the model successfully unfolds the Swiss Roll topology, confirming that the separation criterion eliminates non-injective collapse. However, the embedding exhibits geometric irregularities and uneven point density. This demonstrates that injectivity alone is necessary but insufficient for smooth geometric preservation.

\textbf{SPAE regularization only.} (Fig.~\ref{fig:comparison} (b)) SPAE's distance-preserving constraint produces a cleaner rectangular embedding with more uniform point distribution. However, as discussed in Appendix~\ref{appendix:compare}, SPAE's rigid constraint $d_\M\slash d_\N=$ constant is vulnerable to geodesic distance approximation errors, especially near the removed strip region.

\textbf{Injective $+$ bi-Lipschitz regularization (BLAE).} (Fig.~\ref{fig:comparison} (c)) Our full framework combines both terms with  $\kappa=1$, achieving: (1) topological correctness from the injective term, and (2) geometric smoothness from the bi-Lipschitz term. The embedding closely resembles the ground truth parameterization with a uniform point distribution and regular boundaries.

\textbf{SPAE with bi-Lipschitz regularization.} (Fig.~\ref{fig:comparison} (d)) This combination yields inferior results with visible distortion and irregular geometry. This validates our analysis in Appendix~\ref{appendix:compare}: SPAE's strict global constraint conflicts with bi-Lipschitz relaxation, and approximation errors compound. In contrast, our injective regularization only penalizes extreme violations, making it naturally compatible with bi-Lipschitz constraints.

\textbf{Bi-Lipschitz regularization only.} (Fig.~\ref{fig:comparison} (e)) The model fails to unfold the Swiss Roll.

\begin{figure}[!htbp]
    \centering
    \includegraphics[width=\linewidth]{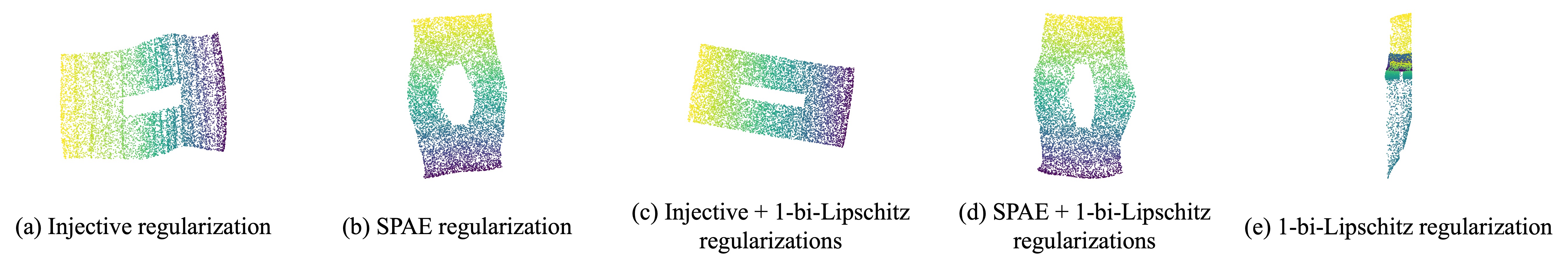}
    \caption{2-D latent representation of Swiss Roll data learned by autoencoders with (a) only injective regularization, (b) only SPAE regularization, (c) injective + 1-bi-Lipschitz regularizations, (d) SPAE + 1-bi-Lipschitz regularizations, \new{(e) only 1-bi-Lipschitz regularization.}}
    \label{fig:comparison}
\end{figure}

\subsection{Extra datasets and experiments}
\label{extra_datasets}

\subsubsection{ssREAD Database}

\begin{figure}[!htbp]
    \centering
    \includegraphics[width=0.9\linewidth]{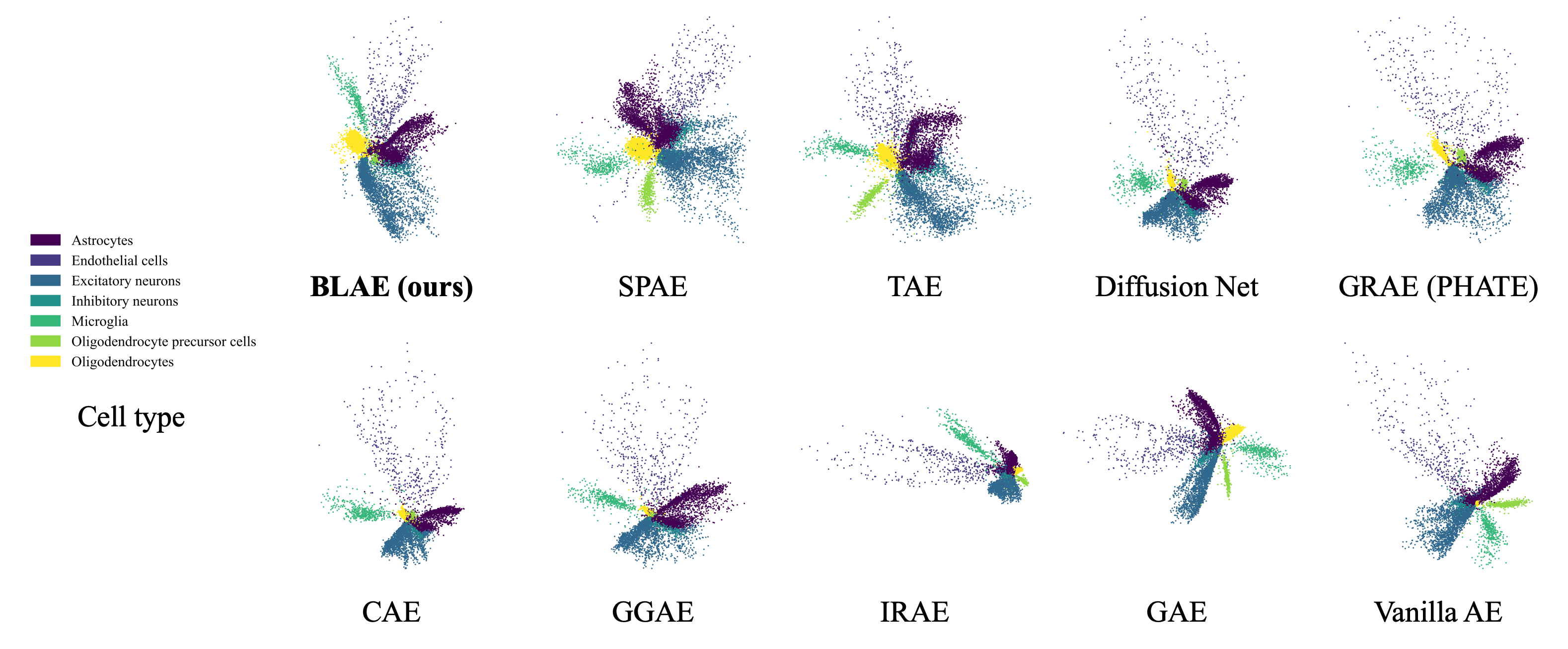}
    \caption{Left: Single cell type in the AD00109 dataset from the ssREAD database. \textit{Others:} 2-D latent representations learned by BLAE and other baseline methods.}
    \label{fig:ad}
\end{figure}

We utilize the AD00109 dataset from the ssREAD database, which provides single-nucleus RNA sequencing (snRNA-seq) data from brain tissue of Alzheimer’s disease (AD) patients. This dataset contains transcriptomic profiles from the prefrontal cortex of several female AD patients, aged between 77 and over 90. Sequencing was performed using the 10x Genomics Chromium platform. Standard preprocessing steps were applied, including quality control, normalization, dimensionality reduction, and unsupervised clustering. The resulting dataset consists of 9,891 cells and 27,801 genes, annotated into seven distinct cell types.

For downstream analysis, we apply principal component analysis (PCA) and retain the top 50 principal components. We split the data by using 25\% of the cells for training and the remaining 75\% for evaluation.
Figure~\ref{fig:ad} shows the 2D latent representations learned by BLAE and several baseline methods. Among these, BLAE, SPAE, TAE, and GAE better preserve the distinct boundaries between different cell types. To quantitatively assess representation quality, we train a Support Vector Machine (SVM) classifier on the latent representations (latent dimension $=2, 32$) of the training set and report classification accuracy on the evaluation set. Results are summarized in Table~\ref{tab:svm}, where BLAE achieves the highest accuracy on both latent dimensions, indicating that it learns the most discriminative latent representations for single-cell type identification.


\begin{table}[htbp]
\centering
\caption{Classification accuracy (over 5 runs) of SVM trained on latent representations learned by different models. BLAE achieves the highest accuracy, demonstrating better preservation of cell-type-specific information in the latent space. The best performance is shown in bold.}
\small
\setlength{\tabcolsep}{10pt}
\rowcolors{2}{white}{gray!10}
\begin{tabular}{lcc}
\toprule
\textbf{Model} & \textbf{Latent dimension = 2} & \textbf{Latent dimension = 32} \\
\midrule
BLAE   & \textbf{0.9250 $\pm$ 0.0033} & \textbf{0.9626 $\pm$ 0.0013} \\
SPAE   & $0.9146 \pm 0.0206$ & $0.9576 \pm 0.0019$ \\
TAE  & $0.9107 \pm 0.0121$ & $0.9524 \pm 0.0026$ \\
DN   & $0.9167 \pm 0.0064$ & $0.4200 \pm 0.0821$ \\
GRAE   & $0.9131 \pm 0.0119$ & $0.9476 \pm 0.0022$ \\
CAE    & $0.9222 \pm 0.0030$ & $0.9517 \pm 0.0025$ \\
GGAE   & $0.8855 \pm 0.0161$ & $0.9556 \pm 0.0017$ \\
IRAE   & $0.9066 \pm 0.0200$ & $0.9605 \pm 0.0016$ \\
GAE & $0.9185 \pm 0.0113$ & $0.9575 \pm 0.0022$ \\
Vanilla AE    & $0.8996 \pm 0.0172$ & $0.9557 \pm 0.0017$ \\
\bottomrule
\end{tabular}
\label{tab:svm}
\end{table}

\subsubsection{Extension To Variational Autoencoders}
\label{appendix:vae}

To demonstrate that our regularization framework generalizes beyond deterministic autoencoders, we extend BLAE to variational autoencoders (VAEs), creating Bi-Lipschitz VAE (BL-VAE). This validates that our theoretical insights apply to probabilistic latent variable models. 
We evaluate on the Swiss Roll dataset from Section~\ref{sec:experiment} with identical settings, applying our regularization (equation BLAE) to the VAE decoder.
We assess performance using four metrics: Reconstruction MSE (fidelity), Fréchet Inception Distance (FID) (generation quality via feature distribution comparison), KL Divergence (posterior-prior divergence in VAE objective), and Mutual Information Gap (MIG) (disentanglement quality).

Table~\ref{vae_results} shows BL-VAE achieves substantially lower reconstruction error (7.09e-2 vs. 1.42e-1), confirming our regularization helps escape pathological local minima in probabilistic settings. FID scores are comparable. While BL-VAE exhibits higher KL divergence due to geometric constraints, it achieves dramatically better disentanglement (MIG: 0.828 vs. 0.534), indicating the latent space better aligns with true underlying factors of variation. These results confirm our framework successfully extends to variational settings while preserving its core benefits.

\begin{table}[htbp]
\centering
\caption{Swiss Roll Results (Mean $\pm$ Std over 5 seeds)}
\rowcolors{1}{white}{gray!10}
\begin{tabular}{lcccc}
\toprule
Model & Recon MSE $\downarrow$ & FID $\downarrow$ & KL & MIG $\uparrow$ \\
\midrule
VAE &
1.42e-1 $\pm$ 3.65e-3 &
1.23e-2 $\pm$ 2.32e-3 &
3.06e+0 $\pm$ 1.33e-2 &
5.34e-1 $\pm$ 1.96e-1 \\
BL-VAE &
\textbf{7.09e-2 $\pm$ 1.41e-3} &
\textbf{1.15e-2 $\pm$ 1.15e-3} &
{5.58e+0 $\pm$ 4.06e-2} &
\textbf{8.28e-1 $\pm$ 1.93e-2} \\
\bottomrule
\end{tabular}
\label{vae_results}
\end{table}

\subsubsection{Toy Example with Increased Sample Density}

To address concerns about whether non-injective embeddings reflect genuine inadequacy or merely insufficient data, we extend the toy example from Figure~\ref{fig:toyexample} by increasing the sample size from 20 to 200 points while maintaining the same V-shaped manifold structure.

We sample 200 points uniformly from the V-shaped manifold and train both a vanilla autoencoder and BLAE with identical architectures (hidden dimension = 16). Both models are trained under the same conditions to isolate the effect of injective regularization. As shown in Figure~\ref{fig:toyexample2}, even with 10$\times$ more training samples, the vanilla autoencoder (panels c-d) continues to exhibit non-injective collapse: the latent representation shows severe overlap between the red and blue classes, resulting in distorted reconstruction with an irregular curve. In contrast, BLAE (panels e-f) achieves proper class separation in the latent space and accurately reconstructs the V-shaped structure.

\begin{figure}[htbp]
    \centering
    \includegraphics[width=0.9\linewidth]{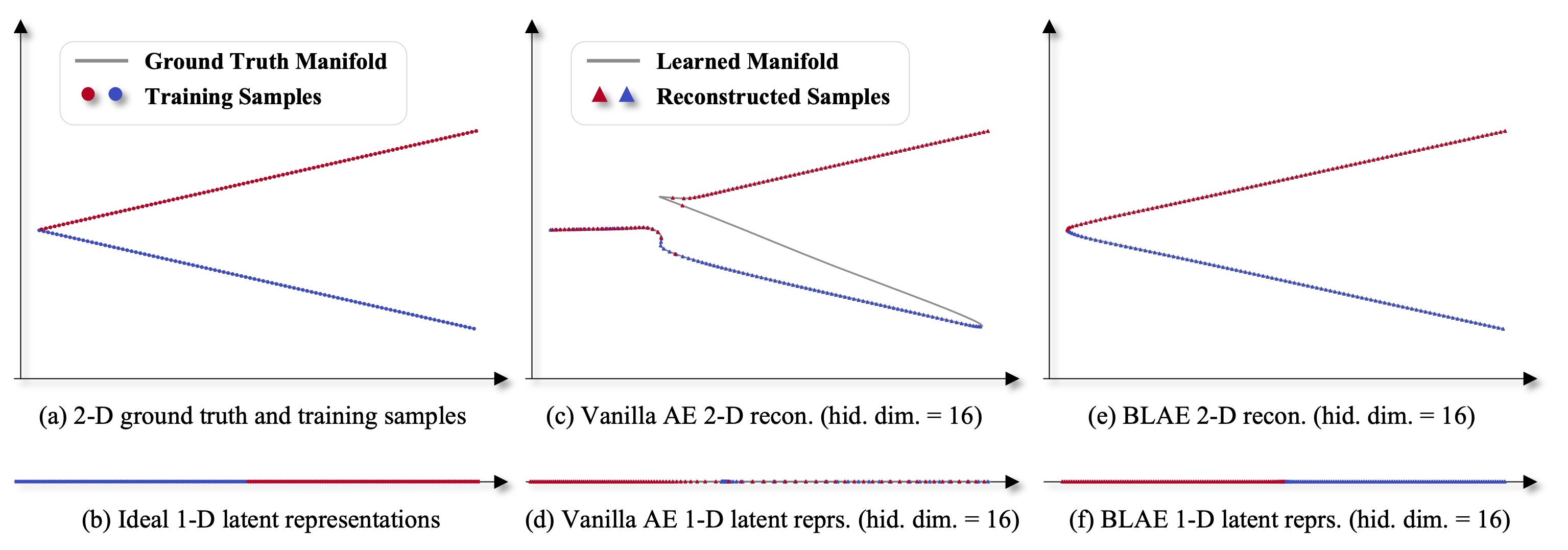}
    \caption{Extended toy example demonstrating the non-injective encoder bottleneck. (a) 200 training points sampled from a V-shaped manifold with two classes (red and blue). (b) Ground truth: an ideal encoder separates the two classes in a 1D latent space. (c) Reconstructed manifolds by Vanilla AE with hidden dimension 16. (d) Corresponding latent representations learned by Vanilla AE. (e) Reconstructed manifolds by BLAE with hidden dimension 16. (f) Corresponding latent representations learned by BLAE show proper class separation.}
    \label{fig:toyexample2}
\end{figure}

\section{Usage of Large Language Models (LLMs)}
We used large language models as a general-purpose writing assistant. Its role was limited to grammar checking, minor stylistic polishing, and improving the clarity of phrasing in some parts of the manuscript. The authors made all substantive contributions to the research and writing.

\end{document}